\definecolor{darkcyan}{rgb}{0.0, 0.55, 0.55}
\definecolor{MidnightBlue}{RGB}{25,25,112}
\definecolor{MidnightBlueComplementingGreen}{RGB}{25,112,25}
\definecolor{MidnightBlueComplementingPurple}{RGB}{112,25,112}
\definecolor{MidnightBlueComplementingRed}{RGB}{112,25,69}
\definecolor{WowColor}{rgb}{.75,0,.75}
\definecolor{MildlyAlarming}{rgb}{0.85,0.25,0.1}
\definecolor{SubtleColor}{rgb}{0,0,.50}
\definecolor{antiquefuchsia}{rgb}{0.57, 0.36, 0.51}
\definecolor{fashionfuchsia}{rgb}{0.96, 0.0, 0.63}
\definecolor{jade}{rgb}{0.0, 0.66, 0.42}
\definecolor{caribbeangreen}{rgb}{0.0, 0.8, 0.6}
\definecolor{aquamarine}{rgb}{0.5, 0.8, 0.85}
\definecolor{attentioncolor}{RGB}{152,90,81}
\definecolor{burgred}{RGB}{40,3,22}
\definecolor{AnnieGreen}{RGB}{17,123,92}
\definecolor{Turquoise}{RGB}{64,224,208}
\definecolor{darkjade}{RGB}{0,122,84}
\definecolor{Window1}{RGB}{92,150,31}%
    \definecolor{Window1dark}{RGB}{41,67,13}%
\definecolor{Window2}{RGB}{255,168,28}
    \definecolor{Window2dark}{RGB}{114,75,12}
\definecolor{Window3}{RGB}{255,96,33}
    \definecolor{Window3dark}{RGB}{97,36,12}
\definecolor{InputColor}{RGB}{20,255,177}
    \definecolor{InputColorlight}{RGB}{222,237,229}
\NewDocumentCommand{\Noah}{mo}{
    \IfValueF{#2}{
                        {{
                            \textcolor{magenta}{ 
                            \textbf{N:}
                            \textit{{#1}}
                            }
                        }}
        }
    \IfValueT{#2}{
                        \marginnote{{\scriptsize
                            \textcolor{magenta}{ 
                            \textbf{N:}
                            \textit{{#1}}
                            }
                        }}
        }
                    }
\NewDocumentCommand{\Annie}{mo}{
    \IfValueF{#2}{
                        {{
                            \textcolor{AnnieGreen}{ 
                            \textbf{A:}
                            \textit{{#1}}
                            }
                        }}
        }
    \IfValueT{#2}{
                        \marginnote{{\scriptsize
                            \textcolor{AnnieGreen}{ 
                            \textbf{A:}
                            \textit{{#1}}
                            }
                        }}
        }
                    }
\NewDocumentCommand{\Gudi}{mo}{
    \IfValueF{#2}{
                        {{
                            \textcolor{caribbeangreen}{ 
                            \textbf{G:}
                            \textit{{#1}}
                            }
                        }}
        }
    \IfValueT{#2}{
                        \marginnote{{\scriptsize
                            \textcolor{caribbeangreen}{ 
                            \textbf{G:}
                            \textit{{#1}}
                            }
                        }}
        }
                    }
\NewDocumentCommand{\Songyan}{mo}{
    \IfValueF{#2}{
                        {{
                            \textcolor{violet}{ 
                            \textbf{Songyan:}
                            \textit{{#1}}
                            }
                        }}
        }
    \IfValueT{#2}{
                        \marginnote{{\scriptsize
                            \textcolor{violet}{ 
                            \textbf{Songyan:}
                            \textit{{#1}}
                            }
                        }}
        }
                    }
\NewDocumentCommand{\Alireza}{mo}{
    \IfValueF{#2}{
                        {{
                            \textcolor{blue}{ 
                            \textbf{Alireza:}
                            \textit{{#1}}
                            }
                        }}
        }
    \IfValueT{#2}{
                        \marginnote{{\scriptsize
                            \textcolor{blue}{ 
                            \textbf{N:}
                            \textit{{#1}}
                            }
                        }}
        }
                    }
\NewDocumentCommand{\Martina}{mo}{
    \IfValueF{#2}{
                        {{
                            \textcolor{red}{ 
                            \textbf{M:}
                            \textit{{#1}}
                            }
                        }}
        }
    \IfValueT{#2}{
                        \marginnote{{\scriptsize
                            \textcolor{red}{ 
                            \textbf{M:}
                            \textit{{#1}}
                            }
                        }}
        }
                    }
\newcommand{\mytag}[2]{%
  \text{#1}%
  \@bsphack
  \begingroup
    \@onelevel@sanitize\@currentlabelname
    \edef\@currentlabelname{%
      \expandafter\strip@period\@currentlabelname\relax.\relax\@@@%
    }%
    \protected@write\@auxout{}{%
      \string\newlabel{#2}{%
        {#1}%
        {\thepage}%
        {\@currentlabelname}%
        {\@currentHref}{}%
      }%
    }%
  \endgroup
  \@esphack
}
\newcounter{termcounter}
\renewcommand{\thetermcounter}{\Roman{termcounter}}
\crefname{term}{term}{terms}
\def\term{\@ifnextchar[\term@optarg\term@noarg}
\def\term@optarg[#1]#2{%
  \textup{#1}%
  \def\@currentlabel{#1}%
  \def\cref@currentlabel{[][2147483647][]#1}%
  \cref@label[term]{#2}}
\def\term@noarg#1{%
  \refstepcounter{termcounter}%
  \textup{(\thetermcounter)}%
  \cref@label[term]{#1}}
\newcommand{\eqdef}{\ensuremath{
        \overset{
                \scalebox{.5}{\mbox{def.}}
            }{=}
}}
\newcommand{\xxx}{\mathscr{X}}
\newcommand{\yyy}{\mathscr{Y}}
\DeclareMathOperator{\supp}{supp}
\newcommand{\bes}{\begin{subequations}}
\newcommand{\ees}{\end{subequations}}
\newcommand{\eea}{\end{eqnarray}}
\renewcommand{\epsilon}{\varepsilon}
\newtheorem{assumptions}{Assumptions}
\newtheorem{InformalTheorem}{Informal Theorem}
\newtheorem{lemma}{Lemma}
\newtheorem{theorem}{Theorem}
\newtheorem{corollary}{Corollary}
\newtheorem{proposition}{Proposition}
\newtheorem{remark}{Remark}
\newcommand{\code}{%
    \url{https://github.com/AnastasisKratsios/Risk_Bound_Ablation.git}
}
    \definecolor{blueviolet}{rgb}{0.54, 0.17, 0.89}
     \definecolor{amethyst}{rgb}{0.6, 0.4, 0.8}
\begin{document}

\title{Tighter Learning Guarantees on Digital Computers via Concentration of Measure on Finite Spaces}

\author{Anastasis Kratsios$^1$\thanks{$^1$ Equal Contribution. Department of Mathematics, McMaster University and The Vector Institute}, A. Martina Neuman$^2$ \thanks{$^2$ Equal Contribution. Faculty of Mathematics, University of Vienna}, Gudmund Pammer$^3$ \thanks{$^3$ Department of Mathematics, ETH Z\"{u}rich}
}

\markboth{Submitted to: IEEE Transactions on Information Theory - March 2024}%
{Shell \MakeLowercase{\textit{et al.}}: A Sample Article Using IEEEtran.cls for IEEE Journals}

\maketitle

\begin{abstract}   
Machine learning models with inputs in a Euclidean space $\mathbb{R}^d$, when implemented on digital computers, generalize, and their generalization gap converges to $0$ at a rate of $c/N^{1/2}$ concerning the sample size $N$. However, the constant $c>0$ obtained through classical methods can be large in terms of the ambient dimension $d$ and machine precision, posing a challenge when $N$ is small to realistically large.  In this paper, we derive a family of generalization bounds $\{c_m/N^{1/(2\vee m)}\}_{m=1}^{\infty}$ tailored for learning models on digital computers, which adapt to both the sample size $N$ and the so-called geometric representation dimension $m$ of the discrete learning problem.
Adjusting the parameter $m$ according to $N$ results in significantly tighter generalization bounds for practical sample sizes $N$, while setting $m$ small maintains the optimal dimension-free worst-case rate of $\mathcal{O}(1/N^{1/2})$.  Notably, $c_{m}\in \mathcal{O}(m^{1/2})$ for learning models on discretized Euclidean domains.
Furthermore, our adaptive generalization bounds are formulated based on our new non-asymptotic result for concentration of measure in finite metric spaces, established via leveraging metric embedding arguments.
\end{abstract}

\begin{IEEEkeywords}
Generalization Bounds, Finite Metric Space, Concentration of Measure, Statistical Learning Theory, Discrete Optimal Transport, Metric Embedding, Wasserstein Distance.
\end{IEEEkeywords}

\section{Introduction}
\label{s:Introduction}
Many mathematical explanations for the success of machine learning models in solving high-dimensional problems operate under the simplified modeling assumption that the models map between ``continuous'' spaces.  
For instance, many universal approximation theorems for MLPs, e.g.~\cite{mhaskar2016deep,yarotsky2017error,petersen2018optimal,shen2022optimal,acciaio2023designing}, and generalization bounds for various ML models, e.g.~\cite{neyshabur2017pac,neyshabur2018pac,JMLRBartlett_HarveyLiawMehrabian_RiskBounds,bartlett2017spectrally,IEEE_Generalization_2021,aminian2021information,MeiMontanariCPA_2022,IEEE_Generalization_2022,zhou2023exactly,gonon2023approximation,cheng2023theoretical,NgYip_AA_2023__GNNsEigenRisk,hou2022instance,MR4583284,IEEE_GeneralizationMLP_2023,barzilai2023generalization,cheng2024}, assume that the inputs of the respective learning model belong to a positive-measured compact subset of a Euclidean space $\mathbb{R}^d$.
These stylized mathematical assumptions fail to address the constraints imposed on machine learning models by standard digital computers. These structural constraints are implicitly enforced by a variety of factors ranging from software limitations, e.g.~finite machine precision and the limits of floating point arithmetic \cite{GoldbergFloatingPointArithmitic_ACM_1991,muller2018handbook}, to hardware limitations, e.g.~finite Random Access Memory (RAM) in contemporary Graphics Processing Units (GPUs) or Central Processing Units (CPUs).

Simultaneously, it is well-known
that accounting for machine precision offers a means to circumvent {\it the curse of dimensionality} inherent in high-dimensional learning with 
$N$ samples, reducing from the learning rate of $\mathcal{O}(N^{1/(2\vee d)})$ to the parametric rate of $\mathcal{O}(N^{1/2})$; 
see e.g.~\citep[Remark 4.1 and Corollary 4.6]{shalev2014understanding}. However, generalization bounds accounting for machine precision derived using these learning theoretic tools 
can be quite loose when 
the number of training samples $N$ is not massive.  
This occurs because 
the majorant constants derived for the sample complexity using the classical methods, e.g.~\citep[Corollary 4.6]{shalev2014understanding}, can be very large and even depend 
on the dimension $d$. Consequently, the anticipated shortcut through leveraging digital computing considerations often becomes imperceptible for {\it moderate and practical} sample sizes. 

One of our key findings (Theorem~\ref{thm:MAIN_no_partition}) demonstrates that digital computing can yield non-trivial improvements to the theoretical generalization bounds in the regime where $N$ is small-to-large but not massive, as illustrated in Figure~\ref{fig:Phase_Diagram}.
Notably, our depicted majorant constants are determined solely by an adjustable geometric {\it representation dimension} $m$ of the discretized learning problem, 
{\it rather than by} $d$. 
We accomplish this by designing a family of generalization bounds for different values of $m$, all concurrently valid and applicable for various sample sizes. An optimal bound can thus be determined by effectively balancing the majorant constant value and the corresponding guaranteed rate. 
Moreover, we compute these generalization bounds using our other key finding (Theorem~\ref{thrm:ConcentrationFiniteMetricSpaces}), which introduces a novel concentration result for empirical probability measures on finite metric spaces, assessed in the $1$-Wasserstein distance. 

     \begin{figure}[H]
        \centering
        \includegraphics[width=0.5\linewidth]{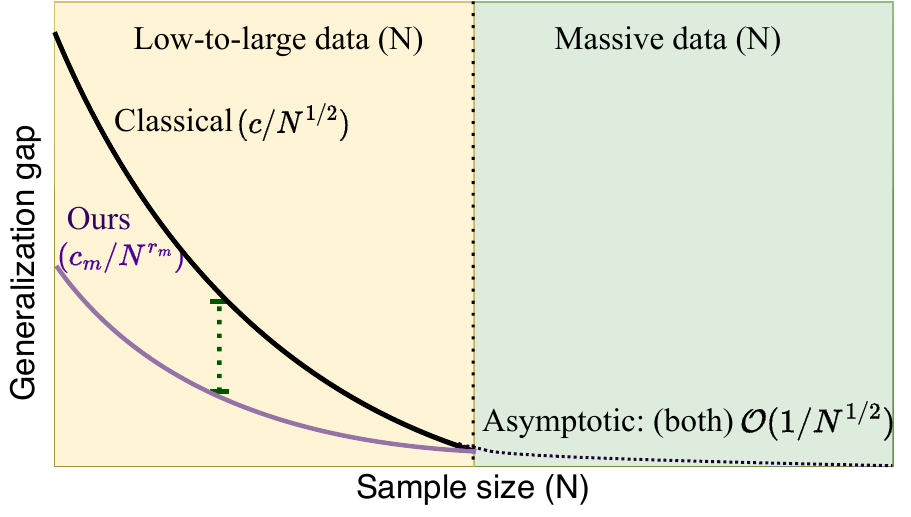}
        \caption{When the sample size $(N)$ is small-to-realistically-large, our non-asymptotic risk bounds are tighter than the classical bounds, e.g.~\citep[Corollary 4.6]{shalev2014understanding}).  
        For massive sample sizes $N$, both bounds yield the {theoretical}, parametric rate of $\mathcal{O}(1/N^{1/2})$.
        See Section~\ref{s:Discussion_PAC} for theoretical and numerical demonstrations of this phenomenon. }
        \label{fig:Phase_Diagram}
     \end{figure}

{Specifically, we investigate the statistical ramifications that digital computers may impose on machine learning models in the presence of noise in the samples. Particularly, our formulation of noise is sufficiently flexible to be interpreted as bridging classical \textit{nonparametric regression} and more general \textit{agnostic}, \textit{probably-approximately-correct (PAC)}-style frameworks.}
{We now introduce the central terminology for our problem before turning to the main text.}

\paragraph{The learning problem}
A machine learning model implemented on a digital computer processes inputs and outputs points on 
\begin{equation} \label{eq:Rdpm}
\mathbb{R}_{p,M}^d \eqdef
    \Big\{ \Big(
        \frac{a_1}{2^{j_1}},
        \dots,
        \frac{a_d}{2^{j_d}}
    \Big)
        \in \mathbb{R}^d
    : a_1,\dots, a_d \in \{-M,\dots,M\} \text{ and } j_1,\dots, j_d\in\{0,\dots,p\}\Big\},
\end{equation}
for some $p,M\in\mathbb{N}$ implicitly enforced by the digital computer\footnote{In standard double-precision floating point arithmetic, $M$ is called the \textit{mantissa}, and $p$ is called the \textit{power}.}. 
Such a grid $\mathbb{R}_{p,M}^d$ is a uniform \textit{discretization} of the Euclidean cube $[-M,M]^d$ given by a dyadic grid whose adjacent points are exactly $\frac{1}{2^p}$ apart.

We consider an unknown \textit{target function} $f^{\star}:\xxx\to\yyy$ defined between discretized Euclidean subsets $\xxx$ and $\yyy$,  
a \textit{data generating probability measure} $\mathbb{P}$ on $\xxx\times \yyy$ concentrated near the graph of $f^{\star}$, and independent {\it noisy} samples 
\begin{equation} \label{eq:noisysamples}
    (X_1,Y_1), \dots ,(X_N,Y_N)\sim \mathbb{P},
\end{equation}
with the noise only obscuring the output components $Y_n${; see Section~\ref{s:Main__ss:Gen} for a rigorous definition}.
Our objective is to derive upper bounds for
\begin{enumerate}
    \item[(i)] the worst-case \textit{generalization gap} $\sup_{\hat{f}\in\mathcal{F}} |\hat{\mathcal{R}}(\hat{f})-\mathcal{R}(\hat{f})|$,
    \item[(ii)] the worst-case \textit{{reconstruction} gap} $\sup_{\hat{f}\in\mathcal{F}} |\hat{\mathcal{R}}(\hat{f})-\mathcal{R}^{\star}(\hat{f})|$, 
\end{enumerate}
uniformly over all $\hat{f}$ in a given \textit{hypothesis class} $\mathcal{F}\subset \yyy^{\xxx}$, respectively referred to as {\it generalization bounds} and {\it {reconstruction} bounds}.
The generalization gap $|\hat{\mathcal{R}}(\hat{f}) - \mathcal{R}(\hat{f})|$ induced by a hypothesis $\hat{f}$ signifies the disparity between {its \textit{empirical risk} $\hat{\mathcal{R}}(\hat{f})$ and its \textit{{population} risk} $\mathcal{R}(\hat{f})$}:
\begin{equation} \label{eqdef:trurisk}
    \hat{\mathcal{R}}(\hat{f})
    \eqdef 
    \frac{1}{N}\sum_{n=1}^N \mathcal{L}(\hat{f}(X_n),Y_n),
    \quad\text{ and }\quad
    \mathcal{R}(\hat{f})
    \eqdef 
    \mathbb{E}_{(X,Y)\sim\mathbb{P}} [\mathcal{L}(\hat{f}(X),Y)],
\end{equation}
where $\mathcal{L}:\yyy\times \yyy\to [0,\infty)$ is a \textit{loss function}. The {reconstruction} gap $|\hat{\mathcal{R}}(\hat{f})-\mathcal{R}^{\star}(\hat{f})|$ characterizes how well the model $\hat{f}$ reconstructs the target function $f^{\star}$ from the noisy training data, where the {population} risk is replaced by its noiseless counterpart called the \textit{excess risk}
\begin{equation} 
\label{eqdef:risks}
    \mathcal{R}^{\star}(\hat{f})
    \eqdef 
    \mathbb{E}_{(X,Y)\sim\mathbb{P}}[\mathcal{L}(\hat{f}(X),f^{\star}(X))].
\end{equation}
{\textit{In the absence of noise, the excess risk equates to the population risk, a reconstruction gap to a generalization gap, and a {reconstruction} bound to a generalization bound.}} 

Departing from the classical approach, our analysis pioneers a novel direction by integrating the geometry of $\xxx\times \yyy$ in the formulation 
of 
generalization and {reconstruction} bounds. We term these derived bounds as ``{\it adaptive}" due to their applicability across different geometric interpretations of $\xxx\times\yyy$ and different sample sizes. At the core of this approach is a crucial technicality: 
a similarly adaptive measure concentration inequality for finite metric spaces.

\paragraph{Concentration of measure on finite metric spaces}
\label{s:Intro__RateGeometry}

The problem of bounding the worst-case generalization gap and {reconstruction} gap can be translated into a \textit{measure concentration} problem on $\xxx\times \yyy$. 
Here, the measure concentration problem refers to the expected average distance between $\mathbb{P}$ and its random {\it empirical measure}
\begin{equation} \label{eqdef:empmeas}
    \mathbb{P}^N \eqdef \frac1{N} \sum_{n=1}^N \delta_{(X_n,Y_n)},
\end{equation}
where $\delta_{(X_n,Y_n)}$ denotes the Dirac measure at $(X_n,Y_n)$.  
By the Glivenko-Cantelli Theorem, see \citep[Chapter 2.4]{varderVaartWellner_EmpiricalProcessesBook_1996}, $\mathbb{P}^N$ converges to $\mathbb{P}$ weakly as $N$ becomes large. Further, the rate at which $\mathbb{P}^N$ converges to $\mathbb{P}$ is quantified by the $1$-{\it Wasserstein distance} $\mathcal{W}(\mathbb{P},\mathbb{P}^N)$ from optimal transport theory \cite{kantorovich1958space}. We opt for 
the Wasserstein distance since it metrizes the topology of convergence in distribution on $\xxx\times \yyy$, 
in contrast to many other statistical ``distances" such as the $f$-divergence. 
Precisely, the geometry of $\xxx\times \yyy$ is embedded within $\mathcal{W}(\mathbb{P},\mathbb{P}^N)$, which captures the distance covered in $\xxx\times \yyy$ when efficiently transporting mass from $\mathbb{P}$ to $\mathbb{P}^N$.

On the one hand, if $\mathbb{P}$ and $\mathbb{P}^N$ are probability measures on a $m$-dimensional Riemannian manifold, or a normed space \cite{Kloeckner_2020_CounterCurse}, then the \textit{Wasserstein concentration rate} $\mathcal{W}(\mathbb{P},\mathbb{P}^N)$ is understood. When $m=1$, $\mathcal{W}(\mathbb{P},\mathbb{P}^N)$ converges to $0$ at a (Monte-Carlo) rate of $\mathcal{O}(1/N^{1/2})$, and when $m\ge 3$, a rate of $\mathcal{O}(1/N^{1/m})$ \cite{fournier2015rate}. When $m=2$, a critical case occurs where the optimal rate is shown to be $\mathcal{O}(\log(N)/N^{1/2})$; see \cite{AjtaiKomlosTusnady_1984__Combinatoria}.
Furthermore, these rates are sharp \cite{GrafLuschgy_2000_FoundationsQuantizationofProbMeasure,Kloeckner_2012_QuantizationAlhforsRegularity,LiuPages_Quantization_JMLR,WeedBach_Concentration_2019__BernoulliOptimal}.

On the other hand, while the metric space $\xxx\times \yyy$ is finite, meaning that its \textit{topological} dimension is $0$, its dimension as a metric space is much more nuanced. This manifests in the fact that, unlike the previously mentioned positive-dimensional spaces, $\xxx\times \yyy$ can be injectively mapped into any Euclidean space $\mathbb{R}^m$ of a representation dimension $m$. 
Indeed, observe that any enumeration $\xxx\times \yyy=\{(x_i,y_i)\}_{i=1}^k$ induces a map $\xxx\times \yyy\ni (x_i,y_i)\mapsto i \in \mathbb{R}$ and since $\mathbb{R}\subset \mathbb{R}^m$, we have an injection of $\xxx\times \yyy$ into any $\mathbb{R}^m$.
One can therefore ask, which dimension can be used to obtain the tightest, worst-case, upper bound on $\mathcal{W}(\mathbb{P},\mathbb{P}^N)$ for a given $N$?  

As evident from the preceding discussion, the choice of $m=1$ yields a bound $\mathcal{W}(\mathbb{P},\mathbb{P}^N)\le c_1/N^{1/2}$, with high probability. The constant $c_1>0$, independent of $N$, encodes the {\it distortion} of distances between points in $\xxx\times \yyy$ that arises from its representation in $\mathbb{R}$. Such distortion can be considerable, even if $\xxx\times \yyy$ contains only three points, as illustrated by Figure~\ref{fig:DistIllustration}.
Therefore, it is conceivable that $c_1$ can become exceptionally large when $\xxx\times \yyy$ is a subset of $\mathbb{R}^d_{p,M}$ in ~\eqref{eq:Rdpm}. 

\begin{figure}[H]
\caption{The distortion incurred when compressing {a} $3$-point subset of $\mathbb{R}^2$, illustrated by Figure~\ref{fig:DistIllustration__NoDist}, into a $3$-point subset of the real line $\mathbb{R}$, intuitively illustrated by Figure~\ref{fig:DistIllustration__Dist}, 
results from the necessary shrinking or stretching of distances between the points.
}
\label{fig:DistIllustration}
     \centering
     \begin{subfigure}[b]{0.45\textwidth}
         \centering
         \includegraphics[width=.45\linewidth]{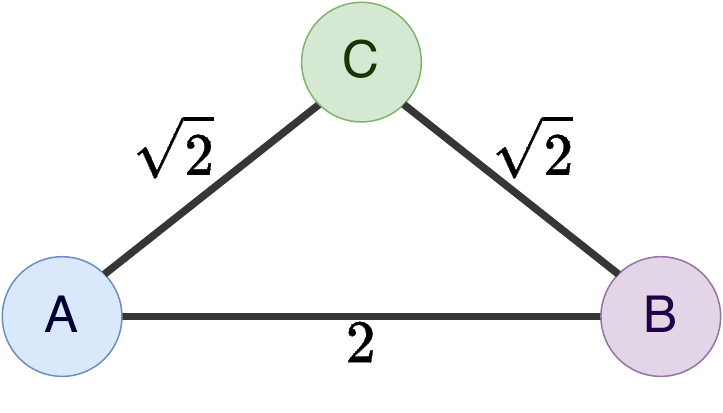}
         \caption{Consider three points $A = (0,0)$, $B = (2,0)$, and $C= (1,1)$ in $\mathbb{R}^2$. The distance between $A$ and $B$ is $2$, and 
         the distance 
         between $A$ (resp. $B$) and $C$ is $\sqrt{2}$.
         }
        \label{fig:DistIllustration__NoDist}
     \end{subfigure}
     \hfill
     \begin{subfigure}[b]{0.45\textwidth}
         \centering
         \includegraphics[width=.45\linewidth]{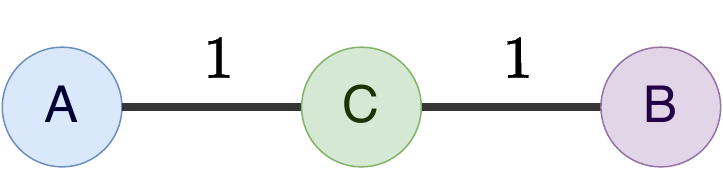}
         \caption{
         Suppose we embed $\{A,B,C\}$ into $\mathbb{R}$, sending $A\mapsto 0$, $B\mapsto 2$, $C\mapsto 1$, then the distance between $A$ and $B$ is kept, but the distance between $A$ (resp. $B$) and $C$ is distorted by a factor of $\sqrt{2}$.}
         \label{fig:DistIllustration__Dist}
     \end{subfigure}
\end{figure}

At the other extreme, 
high-dimensional representations of $\xxx\times \yyy$, i.e.\ when $m\gg 1$, deliver high-probability bounds of the form, $\mathcal{W}(\mathbb{P},\mathbb{P}^N)\le c_m/N^{1/m}$. In this case, $c_m>0$ is typically much smaller than $c_1$. However, the convergence rate of $1/N^{1/m}$ is significantly slower than that obtained using one-dimensional representation. The representation dimension parameter $m$ thus enables 
one 
to modulate the trade-off between the adaptive constant $c_m$ and  the convergence rate $1/N^{1/m}$.  
These observations lead to the following informal version of our result regarding the concentration of measure when $\xxx$, $\yyy$ are discretized Euclidean subsets. To streamline the presentation, we incorporate the constant $c_m$ into the notation $\mathcal{O}$. The value of this constant can be calculated explicitly using the details in Theorem~\ref{thrm:ConcentrationFiniteMetricSpaces}.  


\begin{InformalTheorem}[Concentration of measure on finite grids]
\label{informaltheorem_Concentration} \hfill\\
Let $d$, $p$, $M$, $k$, $N\in \mathbb{N}$. Let $\mathbb{P}$ be a probability measure supported on a $k$-point subset of $\mathbb{R}^d_{p,M}\times \mathbb{R}^1_{p,M}$, and let $\mathbb{P}^N$ be its empirical version defined by $N$ i.i.d.\ samples.~
Then with {probability $1-2^{-\Theta(N)}$}, $\mathcal{W}(\mathbb{P},\mathbb{P}^N)$ is bounded above by\footnote{{We omit $m=2$ here because the corresponding bound incurs a logarithmic factor in $N$ (see Table~\ref{tab:concentration_main_RATES}); in this informal statement, we aim to present a uniform expression.}}
\begin{equation*}
    \min_{\underset{m\neq 2}{m\in \mathbb{N}}} \underbrace{\frac1{N^{1/(m\vee 2)}}}_{\text{adaptive rate}}
        \begin{cases}
            \mathcal{O}(k) & \mbox{ if } m=1\\
            \mathcal{O}\Big(\frac{\log(k)^{1/2} k^{2/m}}{m^{1/2}}\Big) & \mbox{ if } 3\le m \leq \lceil 8 \log(k)\rceil\\
            \mathcal{O}(\log(k)) & 
            \mbox{ if } \lceil 8 \log(k) \rceil < m < d\\
            1 & \mbox{ if } {d}\le m.
        \end{cases}
\end{equation*}
\end{InformalTheorem}

\subsection{Summary of contributions}
\label{sec:overview}

We present two primary contributions, namely Theorems~\ref{thrm:ConcentrationFiniteMetricSpaces} and \ref{thm:MAIN_no_partition}. Theorem~\ref{thm:MAIN_no_partition} provides adaptive agnostic PAC-learning guarantees applicable to hypothesis classes defined between finite metric spaces, while Theorem~\ref{thrm:ConcentrationFiniteMetricSpaces}, a key tool in deriving Theorem~\ref{thm:MAIN_no_partition}, delivers an adaptive measure concentration result for probability measures in a finite metric space, quantified in $1$-Wasserstein distance. 
The adaptability is manifested in the fact that both generalization and {reconstruction} bounds, as well as the concentration rates, tailor themselves to the metric geometry and the sample size. 

Upon applying our results to multilayer perceptions with ReLU activation functions (ReLU MLPs), we unveil the consistent breaking of the curse of dimensionality in regression analysis when these machine learning models are executed on digital computers. We conclude that the constraints imposed by digital computing may account for 
the real-world success of machine learning, potentially explaining the divergence 
with pessimistic theoretical outcomes in statistical learning theory. While this relationship between the ``discretization trick'' \cite[Remark~4.1]{shalev2014understanding} and curse of dimensionality has been recognized, our results provide explicit evidence that they can yield tighter generalization bounds for classifiers on finite spaces than what can be derived from VC theory for realistic sample sizes $N$.

From a technical standpoint, our theoretical framework introduces new techniques for deriving measure concentration rates and learning theoretic guarantees that are grounded in metric embedding theory and optimal transport. The theory illuminates the geometric trade-off mentioned in \S\ref{s:Intro__RateGeometry} between the Wasserstein concentration rate and the representation dimension.
Moreover, we obtain generic worst-case distortion bounds for Euclidean metric embeddings, complete with explicit majorant constants, drawing upon classical findings from \cite{BourgainEmbedding_Original_1985} and \cite{Matouvsek_1996_Embeddings}.

Finally, as our main results are formulated for general metric spaces, they enable the derivation of generalization bounds for various contemporary machine learning models. 
Examples include graph neural networks which are defined on spaces of graphs~\cite{zhang2018link}, generative models which produce probability measures as outputs~\cite{arjovsky2017Wasserstein,cao2019multi,xu2020cot}, to differential equation solvers defined between infinite-dimensional linear spaces \cite{chen2018neural,herrera2020neural,morrill2021neural,kovachki2021universal,karniadakis2021physics}.

\subsection{Outline of paper} 
Our paper is structured as follows\footnote{The code used to generate figures in our paper is given at \code.}.
Section~\ref{sec:conventions} provides the necessary background, definitions, and notations, along with a preliminary Euclidean embedding result.
Section~\ref{s:Main} presents our two main theorems.
Section~\ref{s:Discussion} explores the implications of our theorems for regression analysis and binary classification over finite metric domains.
Section~\ref{s:Applications} focuses on an application of our results in deep learning. 
All detailed proofs are provided in Section~\ref{sec:Proofs}. Finally, short, extra discussions are given in the Appendix.

\subsection{Comparison with related work}
\label{s:Related_Work}

\paragraph{Comparison with VC and information theoretic results}

Our results diverge from the classical theory regarding statistical learning guarantees and concentration of measure primarily because they possess the unique capability to exploit the geometry of $\xxx\times\yyy$. Specifically, conventional statistical learning outcomes for classifiers on finite spaces \cite{Talagrand_SharperBoundEmpGausProcess_1994, Long_1997ComplexityPAC_ML1998, AryehIosif_2019_ExactPACMLFiniteCase_2019} hinge on set-theoretic tools, such as VC dimension, which do not capture the metric structure. The same is true for standard information-theoretic concentration of measure results on finite sets \cite{MardiajiaoTanczosNowakWeissman__2020_IAI_ConcentrationEmpiricalKL} which rely on tools, such as relative entropy, that do not encode the metric information of the underlying space.  
Consequently, a scenario unfolds in which information-theoretic and VC-type bounds are 
tight only for immensely large sample sizes $N$, and yet prove to be conservative for pragmatically large values of $N$, as compared to our bounds. 

An example highlighting the fact that classical information-theoretic divergences cannot capture metric structure is as follows.
Let $\mathbb{P}=\delta_0$ and $\mathbb{Q}_j$ be the uniform distribution on $\{0,j\}$, for $j\in\mathbb{N}$. 
Then $\mathcal{W}(\mathbb{P},\mathbb{Q}_j) = \frac{j}{2}$ but $\operatorname{KL}(\mathbb{P}|\mathbb{Q}_j)=\log(2)$.  
Thus, the KL-divergence cannot detect the distance between the points $\{0,j\}$ while the Wasserstein distance depends linearly on the distance between $0$ and $j$.  
The same holds for \textit{most} $f$-divergences due to their representation in terms of the $E_{\gamma}$-divergence shown in~\cite[Proposition 3]{sason2016f}. 
Thus, such $f$-divergences, between $\mathbb{P}$ and $\mathbb{Q}_j$, do not depend on $j$.

\paragraph{Concentration of measure on finite metric spaces} 
We note that \cite{sommerfeld2018inference} also studied the convergence of $\mathbb{P}^N$ to $\mathbb{P}$ over finite metric spaces. However, their analysis primarily addresses situations where the sample size $N$ is exceptionally large and does not optimize their bounds for realistic $N$. Moreover, outcomes related to the concentration of measure in finite metric spaces can be inferred from findings applicable to generalized doubling metric spaces \cite{boissard2014mean,WeedBach_Concentration_2019__BernoulliOptimal}. While these results are broad in scope, their provided rates explicitly rely on the doubling constant of the underlying metric space.  
From a practical standpoint, computing these constants as well as the doubling dimensions of many finite spaces is often challenging. This can be seen in~\cite{durand2021least,DurandCartagenaEstibalitzSoraTradacete_2023_DiscreteMath}, as well as in~\citep[Lemma 21]{kratsios2022small}, where even relatively simple finite unweighted graphs demand considerable effort to attain accurate estimates of these quantities.  

\paragraph{The impact of digital computing}
The impact of digital computing on machine learning and scientific computing problems has recently come into focus in the signal processing, approximation theory, and inverse problems literature. 
Notable examples include the efficiency of sampling-quantization algorithms in signal processing for bandlimited signals \cite{DaubechiesDeVore_ContinuousToDigitalSamplingQuantization_2003__AnnMath,DGunturk_ImprovedRatesContinuousToDigitalSamplingQuantization_2003__AMS}, optimal compression rates in rate-distortion theory~\cite{GraphKlotzVoigtlaender_2023_RDTDL__FOCM}, recent literature demonstrating the (in)feasibility of solving various inverse problems on digital hardware \cite{boche2022inverse, boche2022limitations,boche2022non}, and limited research on expressivity of neural networks implemented on digital computers \cite{park2024expressive}. These works primarily investigate the \textit{analytic} effects of machine precision, e.g.\ on approximation, whereas we focus on the \textit{statistical} ramifications of digital computing.  

\section{Preliminary} \label{sec:conventions}

We introduce the necessary tools and terminology to formulate our main results {in} Section~\ref{sec:background}; this includes symbols, notations, and conventions that will be consistently employed throughout this paper.  Section~\ref{sec:Euclideanembedding} further summarizes our technical sharpening of the available worst-case results on the best embedding of any finite metric space in Euclidean spaces of any prescribed dimension.
It is assumed that the reader is familiar with fundamental concepts in analysis and probability theory.

\subsection{Background and notation} \label{sec:background}
 
\paragraph{On metric spaces} 
Given a metric space $(\xxx,d_{\xxx})$, consisting of a non-empty set of points $\xxx$ and a metric $d_{\xxx}$, its  \textit{diameter} is defined as
\begin{equation*} 
   \mathsf{d}(\xxx) \eqdef \sup_{x,y\in\xxx} d_{\xxx}(x,y).
\end{equation*}
The metric space $(\xxx,d_{\xxx})$ is called a finite metric space if $\xxx$ has finite cardinality, i.e. $\mathrm{card}(\xxx)<\infty$. An example of a metric space is a subset of $\mathbb{R}^m$, inheriting the ambient Euclidean distance induced by the Euclidean $\|\cdot\|_2$ norm.

Let $(\xxx, d_{\xxx})$ and $(\yyy, d_{\yyy})$ be two metric spaces (typically representing the input/source space and the output/target space, respectively).  
Their Cartesian product $\xxx\times \yyy$ is metrized using the 
usual product metric
\begin{equation} \label{prodmetric}
    d_{\xxx\times\yyy}((x_1,y_1),(x_2,y_2)) 
    \eqdef d_{\xxx}(x_1,x_2) + d_{\yyy}(y_1,y_2).
\end{equation}

\paragraph{On Lipschitz mappings of finite metric spaces}
Let $\varphi:{\xxx} \to {\yyy}$ denote a map from a \textit{finite} metric space $({\xxx},d_{\xxx})$ to a metric space $({\yyy},d_{\yyy})$. 
We define the \textit{lower Lipschitz} and \textit{upper Lipschitz} constants of $\varphi$ to be, respectively,
\begin{equation*}
    {\rm L}_{\ell}(\varphi) \eqdef \min_{ \substack{ x,\tilde{x}\in\xxx \\ x\neq \tilde x } } \frac{d_{\yyy}\big(\varphi(x),\varphi(\tilde{x})\big)}{d_{\xxx}(x,\tilde{x})} \quad\text{ and }\quad
    {\rm L}_{\mathit{u}}(\varphi) \eqdef \max_{ \substack{ x,\tilde{x}\in\xxx \\ x\neq \tilde x } } \frac{d_{\yyy}\big(\varphi(x),\varphi(\tilde{x})\big)}{d_{\xxx}(x,\tilde{x})}.
\end{equation*}
If ${\rm L}_{\mathit{u}}(\varphi)<\infty$, then $\varphi$ is called a \textit{Lipschitz} map. 
If in addition,
\begin{equation*} 
    0<{\rm L}_{\ell}(\varphi)\leq {\rm L}_{\mathit{u}}(\varphi)<\infty,
\end{equation*}
then $\varphi$ is called a \textit{bi-Lipschitz embedding} ({\it map}) of $\xxx$ into $\yyy$, in which case
\begin{equation*}
    {\rm L}_{\ell}(\varphi) d_{\xxx}(x,\tilde{x})
    \leq 
    d_{\yyy}(\varphi(x),\varphi(\tilde{x}))
    \leq {\rm L}_{\mathit{u}}(\varphi) d_{\xxx}(x,\tilde{x}).
\end{equation*}
The {\it distortion} of $\xxx$ under a bi-Lipschitz map $\varphi$, capturing how much $\varphi$ shrinks or expands points in $\xxx$, is given by 
\begin{equation*}
    \tau(\varphi)\eqdef \frac{{\rm L}_{\mathit{u}}(\varphi)}{{\rm L}_{\ell}(\varphi)}\geq 1.
\end{equation*}
The reader can compare this distortion definition with the \textit{Lipschitz distance} between finite metric spaces in \cite{BourgainEmbedding_Original_1985, Matouvek_OptimalEuclidean_Npoint}.

\paragraph{On probability spaces} A complete metric space $(\xxx,d_{\xxx})$ equipped with its Borel $\sigma$-algebra $\mathcal{B}_{\xxx}$ is a \textit{standard Borel space}, denoted by $(\xxx,\mathcal{B}_{\xxx})$. 
Note that when $(\xxx,d_{\xxx})$ is finite, thus complete, $\mathcal{B}_{\xxx}$ coincides with its powerset $\{0,1\}^{\xxx}$. 
Denote by $\mathcal{P}(\xxx)$ the set of all Borel probability measures on $\xxx$. 
Then, for each $\nu\in\mathcal{P}(\xxx)$,  $(\xxx,\mathcal{B}_{\xxx},\nu)$ is referred to as a \textit{Borel probability space}.

On a Borel probability space $(\xxx,\mathcal{B}_{\xxx},\nu)$, let $X_1, \dots, X_N\sim\nu$ be a finite sequence of identical independent (i.i.d.) random variables taking values in $\xxx$. 
Then, similar to \eqref{eqdef:empmeas}, the \textit{random empirical measure} $\nu^N\in\mathcal{P}(\xxx)$ associated with $\nu$ is
\begin{equation*} 
    \nu^N \eqdef \frac{1}{N} \sum_{n=1}^N \delta_{X_n}.
\end{equation*}

\paragraph{On distances between probability measures on finite metric spaces} 
Let $(\xxx,d_{\xxx})$ be a finite metric space, and let $\mathbb{Q}, \tilde{\mathbb{Q}}\in\mathcal{P}(\xxx)$.
Let $\gamma$ denote a {\it coupling} between $\mathbb{Q}$, $\tilde{\mathbb{Q}}$; namely, $\gamma\in\mathcal{P}(\xxx\times\xxx)$ satisfying $\pi^1_{\texttt{\#}}\gamma= \mathbb{Q}$ and $\pi^2_{\texttt{\#}}\gamma= \tilde{\mathbb{Q}}$. Here, $\pi^1$ and $\pi^2$ are the canonical projections of $\xxx\times \xxx$ onto its first and second components, respectively. 
The ($1$-)Wasserstein distance between $\mathbb{Q}, \tilde{\mathbb{Q}}$ on $(\xxx,d_{\xxx})$ is defined by
\begin{equation} \label{eqdef:Wassdistance}
    \mathcal{W}_{\xxx}(\mathbb{Q},\tilde{\mathbb{Q}}) \eqdef \inf_{\gamma\in\Gamma(\mathbb{Q},\tilde{\mathbb{Q}})} \sum_{x\in \xxx}\sum_{y\in \xxx} d_{\xxx}(x,y)\,\gamma(x,y),
\end{equation}
where $\Gamma(\mathbb{Q},\tilde{\mathbb{Q}})$ is the set of all couplings $\gamma$ between $\mathbb{Q}$, $\tilde{\mathbb{Q}}$. 

Alternatively, we can also quantify the distance between $\mathbb{Q}, \tilde{\mathbb{Q}}$ by their \textit{total variation distance} $\mathrm{TV}(\mathbb{Q},\tilde{\mathbb{Q}})$, expressed as their largest difference ascribed to any event; formally
\begin{equation} \label{eqdef:totalvariation}
   \mathrm{TV}(\mathbb{Q},\tilde{\mathbb{Q}}) \eqdef \max_{B \subset \xxx} |\mathbb{Q}(B)-\tilde{\mathbb{Q}}(B)|.
\end{equation}
In particular, we will use total variation to assess the disparity between a measure and its noisy 
version. This choice is advantageous due to the straightforward formulation \eqref{eqdef:totalvariation} and practical verifiability.
Furthermore, as established in \cite[Theorem~6.15]{VillaniOTBook_2009}, $\mathcal{W}_{\xxx}(\mathbb{Q},\tilde{\mathbb{Q}})$ can be bounded above by $\mathrm{TV}(\mathbb{Q},\tilde{\mathbb{Q}})$ times the diameter $\mathsf{d}(\xxx)$ of $\xxx$:
\begin{equation} 
\label{eq:Wass_TV}
    \mathcal{W}_{\xxx}(\mathbb{Q},\tilde{\mathbb{Q}})
    \le 
   \mathsf{d}(\xxx)\,\mathrm{TV}(\mathbb{Q},\tilde{\mathbb{Q}}).
\end{equation}


\subsection{Euclidean embedding of finite metric spaces} \label{sec:Euclideanembedding}
 
Central to our approach in leveraging the geometric structure of a finite metric space is the fact that it can be embedded via a bi-Lipschitz map into a Euclidean space of \textit{any} dimension, a property we now formalize with a precise statement.

\begin{proposition}[Euclidean representation of finite metric spaces] \label{prop:EuclideanRep} \hfill\\
Let $(\xxx,d_{\xxx})$ be a finite metric space with $\mathrm{card}(\xxx)=k$. Then for every $m\in\mathbb{N}$, there exists a bi-Lipschitz embedding $\varphi_m: \xxx\to\mathbb{R}^m$ whose distortion $\tau(\varphi_m)$ adheres to the following conditions.
\begin{itemize} 
    \item If $m=1,2$, then $\tau(\varphi_m)\leq 12k$.
    \item If $2< m \le \lceil 8\log (k)\rceil$, then
    \begin{equation} \label{mid1}
        \tau(\varphi_m)\leq 720 \lfloor 2\log (k)+1\rfloor k^{2/m} \tilde{\varepsilon}_{m,k} \frac{\sqrt{\log (k)}}{\sqrt{m}}.
    \end{equation}
    \item If $\lceil 8\log (k)\rceil < m\leq 2^{k}-1$, then
    \begin{equation} \label{mid2}
        \tau(\varphi_m)\leq 48\lfloor 2\log (k) +1\rfloor\tilde{\varepsilon}_{m,k}.
    \end{equation}
    \item If $m \geq 2^{k}$, then
    \begin{equation*}
        \tau(\varphi_m)\leq 48\lfloor 2\log (k)+1\rfloor.
    \end{equation*}
\end{itemize}
Here in \eqref{mid1}, \eqref{mid2}, 
\begin{equation} \label{eqdef:tildeeps}
    \tilde{\varepsilon}_{m,k} \eqdef \frac{(m^{1/2}+2\sqrt{2} (\log (k))^{1/2})^{1/2}}{(m^{1/2}-2\sqrt{2}(\log (k))^{1/2})^{1/2}}.
\end{equation}
Suppose in addition that there exists $d\in\mathbb{N}$ such that $\xxx$ is a metric subspace of $\mathbb{R}^d$. Then, a tighter upper bound for the embedding distortion can be provided in the case $2< m \le \lceil 8\log (k)\rceil$, which is,
\begin{equation*} 
    \tau(\varphi_m)\leq 15\,k^{2/m}\Big(\frac{\log (k)}{m}\Big)^{1/2},
\end{equation*}
and in the case $m\geq d$, which is, $\tau(\varphi_m) = 1$.
\end{proposition}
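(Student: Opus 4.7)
The plan is to prove the proposition by case analysis on how $m$ compares to $k$ and $\log(k)$, invoking three classical pillars: Bourgain's embedding theorem \cite{BourgainEmbedding_Original_1985}, the Johnson--Lindenstrauss (JL) dimension-reduction lemma, and Matou\v{s}ek's low-dimensional refinement \cite{Matouvsek_1996_Embeddings}. I would first handle $m\geq 2^{k}$ by explicitly constructing Bourgain's embedding: enumerating the non-empty subsets $S\subseteq \xxx$, define $\psi:\xxx\to\mathbb{R}^{2^{k}-1}$ coordinatewise by $\psi(x)_S = c_S\,d_{\xxx}(x,S)$ with Bourgain's random-sign/weight coefficients $c_S$, then pad with zeros to land in $\mathbb{R}^m$. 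Tracking the constants through the Bourgain argument yields $\tau(\varphi_m)\leq 48\lfloor 2\log(k)+1\rfloor$, and the padding step preserves the distortion.

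For $\lceil 8\log (k)\rceil < m \leq 2^{k}-1$, I would post-compose $\psi$ with a JL projection $\Pi:\mathbb{R}^{2^{k}-1}\to\mathbb{R}^m$ (a normalized Gaussian matrix). Applying sub-Gaussian / $\chi^2$ concentration to the $\binom{k}{2}$ difference vectors $\psi(x)-\psi(\tilde x)$ together with a union bound yields, with positive probability, a multiplicative distortion factor of at most $\tilde{\varepsilon}_{m,k}$; the threshold $m>8\log(k)$ is exactly what makes the denominator in \eqref{eqdef:tildeeps} positive. Multiplying with Bourgain's factor gives \eqref{mid2}. For $2<m\leq \lceil 8\log (k)\rceil$, JL alone cannot reach dimension $m$, so I would compose $\psi$ with Matou\v{s}ek's low-dimensional re-embedding of $k$-point subsets of $\ell_{2}$ into $\ell_{2}^{m}$, which contributes a factor of order $k^{2/m}\sqrt{\log(k)/m}$ and internally uses a JL-type step producing the $\tilde{\varepsilon}_{m,k}$ factor. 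Multiplication with Bourgain's contribution yields \eqref{mid1} with constant $720 = 48\cdot 15$. Finally, for $m=1,2$ I would use a direct enumeration embedding $x_i\mapsto i\cdot \alpha$ with $\alpha$ proportional to $\min_{i\neq j} d_{\xxx}(x_i,x_j)$, bound the resulting distortion by the ratio of the diameter of $\xxx$ to its minimal gap, and verify the resulting bound is at most $12k$; the $m=2$ case follows by padding with a zero coordinate.

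For the subset-of-$\mathbb{R}^{d}$ sharpening, the case $m\geq d$ is immediate from the isometric inclusion $\xxx\hookrightarrow \mathbb{R}^{d}\hookrightarrow \mathbb{R}^{m}$ via zero-padding, which gives $\tau(\varphi_m)=1$. When $2<m\leq \lceil 8\log (k)\rceil$, the Bourgain step is no longer needed since $\xxx$ already sits inside a Euclidean space; applying Matou\v{s}ek's low-dimensional embedding directly to $\xxx\subset \mathbb{R}^{d}$ yields the tighter bound $15\,k^{2/m}(\log (k)/m)^{1/2}$ without the $\lfloor 2\log(k)+1\rfloor$ Bourgain factor. The principal obstacle will be producing the explicit numerical constants $12$, $15$, $48$, $720$ and the precise form of $\tilde{\varepsilon}_{m,k}$: this requires re-deriving Bourgain's and JL's theorems with tracked constants, which is bookkeeping-intensive because the original arguments are written qualitatively. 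In particular, the choice of probability thresholds in Bourgain's random-subset decomposition, and the sub-Gaussian tail parameters in the JL concentration step, must be made explicit and optimized to match the stated inequalities.
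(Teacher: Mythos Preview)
Your overall architecture (Bourgain into $\ell_2^{2^k}$, Johnson--Lindenstrauss to reduce dimension, Matou\v{s}ek's low-dimensional refinement for small $m$, and zero-padding for the Euclidean subcase) matches the paper's proof almost exactly, including the decomposition $720=48\cdot 15$. The one substantive discrepancy is that the paper first passes through $\mathbb{R}^k$ (via Bourgain+JL with $m=k$) before invoking Matou\v{s}ek's Theorem~2.2, whereas you apply Matou\v{s}ek directly to the Bourgain image in $\mathbb{R}^{2^k-1}$; this is harmless since Matou\v{s}ek's result applies to $k$-point subsets of any $\ell_2^n$.

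There is, however, a genuine gap in your treatment of $m=1,2$. The enumeration map $x_i\mapsto i\cdot\alpha$ with $\alpha=\min_{i\neq j}d_{\xxx}(x_i,x_j)$ does \emph{not} have distortion bounded by $12k$: its distortion is controlled by the aspect ratio $\mathsf{d}(\xxx)/\alpha$, and this ratio is \emph{not} bounded by any function of $k$. For instance, take $\xxx=\{0,\epsilon,1\}\subset\mathbb{R}$ with the Euclidean metric and $\epsilon\ll 1$: here $k=3$, but any equispaced enumeration has distortion of order $1/\epsilon$, which can be made arbitrarily large while $12k=36$. The claim ``verify the resulting bound is at most $12k$'' therefore fails. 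The paper does not use an enumeration map at all; it invokes \cite[Theorem~2.1]{Matouvek_OptimalEuclidean_Npoint}, a nontrivial result whose construction is metric-driven (a carefully chosen non-contracting map into $\mathbb{R}$) and genuinely earns the $O(k)$ distortion bound. You will need to either reproduce that argument or cite it; the naive enumeration cannot be repaired.
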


{The proof of Proposition~\ref{prop:EuclideanRep} is given in Section~\ref{sec:Proofs_Discrete_Atlas}.
It relies mostly on results derived from Bourgain's metric embedding theorem~\citep{BourgainEmbedding_Original_1985} and the Johnson-Lindenstrauss lemma\footnote{{An observant reader may notice that \eqref{mid2} resembles the Johnson-Lindenstrauss lemma. Indeed, we employ the lemma for this specific range of embeddings. For other dimensional ranges, however, the lemma alone is insufficient for our purposes; see \eqref{embedding_mid_range}, \eqref{dim_condition}.}}~\citep{JohnsonLindenstrauss_1984_originalpaper}, and is further informed by ideas from~\citep{Matouvek_OptimalEuclidean_Npoint, DubhashiPanconesi_2009_Concentration, Matouvsek_2002_LecturesDiscreteGeo}.}

\section{Set-up and main results}
\label{s:Main}

We present two main results.
The first, Theorem~\ref{thrm:ConcentrationFiniteMetricSpaces}, is an adaptive concentration inequality on finite metric spaces, representing a significant standalone contribution to \textit{statistical} optimal transport on \textit{finite} metric spaces. 
It is detailed in Section~\ref{sec:Wassconcmeaspf}.
The second result, Theorem~\ref{thm:MAIN_no_partition}, establishes novel adaptive generalization and {reconstruction} bounds for a statistical learning problem on finite metric spaces. 
It follows as a direct consequence of the first and is presented in Section~\ref{s:Main__ss:Gen}.
Each subsection introduces the relevant context and concludes with a precise formalization of the respective result.

\subsection{A result on adaptive concentration of measure on finite metric spaces} \label{sec:Wassconcmeaspf}

Consider a finite metric space $(\xxx,d_{\xxx})$.
Let $\mathbb{P}\in\mathcal{P}(\xxx)$ be a Borel probability measure, and let $X_1,\dots,X_N\sim\mathbb{P}$ be a sequence of i.i.d. random variables in $\xxx$. Let $\mathbb{P}^N$ be the empirical measure defined by $(X_i)_{i=1}^N$. 
We derive the rate at which the Wasserstein distance \eqref{eqdef:Wassdistance} $\mathcal{W}_{\xxx}(\mathbb{P},\mathbb{P}^N)$ between $\mathbb{P}$, $\mathbb{P}^N$, concentrates, by capitalizing on the connection between the metric structure $(\xxx,d_{\xxx})$ and its Euclidean representation in the following Theorem~\ref{thrm:ConcentrationFiniteMetricSpaces}. 
A proof can be found in Section~\ref{sec:propCOMproof}.

\begin{theorem}[Adaptive concentration of measure on finite metric spaces]
\label{thrm:ConcentrationFiniteMetricSpaces} \hfill\\
Let $(\xxx,d_{\xxx})$ be a finite metric space, with ${\rm card}(\xxx)=k$. Let $\mathbb{P}\in\mathcal{P}(\xxx)$ be a Borel probability measure. Let $X_1,\dots,X_N\sim\mathbb{P}$ be a sequence of i.i.d.~random variables in $\xxx$, and let $\mathbb{P}^N$ be the associated random empirical measure.  
Then for every $m\in\mathbb{N}$ and every bi-Lipschitz Euclidean embedding $\varphi_m: (\xxx, d_{\xxx})\to (\mathbb{R}^m,\|\cdot\|_2)$, the following hold:\\~\\
{\rm (i)} $\mathbb{E}\big[\mathcal{W}_{\xxx}(\mathbb{P},\mathbb{P}^N)\big]\le  C_1(\varphi_m,\xxx,m,N)$; \\
{\rm (ii)} for every $\varepsilon>0$, the event that
\begin{equation*}
    \big|\mathcal{W}_{\xxx}(\mathbb{P},\mathbb{P}^N) - \mathbb{E}\big[\mathcal{W}_{\xxx}(\mathbb{P},\mathbb{P}^N)\big]\big| > 
    C_2(\varphi_m,\xxx,m,N) + \varepsilon
\end{equation*}
occurs with probability at most $2\exp\big(-\frac{2N\varepsilon^2}{\tau(\varphi_m)^2\,\mathsf{d}(\xxx)^2}\big)$, where
\begin{equation*} 
    \begin{split}
        C_1(\varphi_m,\xxx,m,N) &\eqdef \tilde{C}_m\tau(\varphi_m)\,\mathsf{d}(\xxx)r_m(N)\\
        C_2(\varphi_m,\xxx,m,N) &\eqdef \tilde{C}_m(\tau(\varphi_m)-1)\mathsf{d}(\xxx)r_m(N).
    \end{split}
\end{equation*}
Here, the values of $r_m(N)$ and $\tilde{C}_m$ are provided in Table~\ref{tab:concentration_main_RATES}. 
Furthermore, the values of $\tau(\varphi_m)$ are not larger than what is recorded in Table~\ref{tab:concentration_main_RATES}, and in the case\footnote{We emphasize that the quantities $r_m(N)$ and $\tilde{C}_m$ remain as in Table~\ref{tab:concentration_main_RATES} even in the Euclidean setting case.} there exists $d\in\mathbb{N}$ such that $\xxx$ is a metric subspace of $\mathbb{R}^d$, they are bounded by what is recorded in Table~\ref{tab:Euclidean_Distortion}.
\end{theorem}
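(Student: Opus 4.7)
The plan is to transfer the problem to Euclidean space via the bi-Lipschitz embedding $\varphi_m$, apply a compactly-supported Euclidean Wasserstein concentration bound of Fournier--Guillin type in $\mathbb{R}^m$, and then translate back to $(\xxx,d_\xxx)$ at the price of the distortion factor $\tau(\varphi_m)$. The starting observation is that pushing any coupling forward by $\varphi_m\otimes\varphi_m$ (and pulling it back via the inverse of $\varphi_m$ on its finite image, which is again bi-Lipschitz) gives, for every $\mu,\nu\in\mathcal{P}(\xxx)$,
\begin{equation*}
    {\rm L}_{\ell}(\varphi_m)\,\mathcal{W}_\xxx(\mu,\nu) \;\leq\; \mathcal{W}_{\mathbb{R}^m}\!\bigl((\varphi_m)_\#\mu,(\varphi_m)_\#\nu\bigr) \;\leq\; {\rm L}_{\mathit{u}}(\varphi_m)\,\mathcal{W}_\xxx(\mu,\nu).
\end{equation*}
Setting $G\eqdef\mathcal{W}_{\mathbb{R}^m}((\varphi_m)_\#\mathbb{P},(\varphi_m)_\#\mathbb{P}^N)$, this supplies the deterministic sandwich $G/{\rm L}_{\mathit{u}}(\varphi_m)\leq \mathcal{W}_\xxx(\mathbb{P},\mathbb{P}^N)\leq G/{\rm L}_{\ell}(\varphi_m)$, which will be the main bridge between the two parts of the theorem.

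\textbf{Part (i).} Since $(\varphi_m)_\#\mathbb{P}$ is supported in a set of Euclidean diameter at most ${\rm L}_{\mathit{u}}(\varphi_m)\,\mathsf{d}(\xxx)$ and $(\varphi_m)_\#\mathbb{P}^N$ is its empirical version from $N$ i.i.d.\ samples, I would invoke the sharp compactly-supported Euclidean Wasserstein expectation bound of Fournier--Guillin type to obtain $\mathbb{E}[G]\leq \tilde{C}_m\,{\rm L}_{\mathit{u}}(\varphi_m)\,\mathsf{d}(\xxx)\,r_m(N)$, with $\tilde{C}_m, r_m(N)$ as recorded in Table~\ref{tab:concentration_main_RATES}. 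Taking expectations in the upper envelope $\mathcal{W}_\xxx\leq G/{\rm L}_{\ell}(\varphi_m)$ and using ${\rm L}_{\mathit{u}}(\varphi_m)/{\rm L}_{\ell}(\varphi_m)=\tau(\varphi_m)$ yields $\mathbb{E}[\mathcal{W}_\xxx(\mathbb{P},\mathbb{P}^N)]\leq \tilde{C}_m\,\tau(\varphi_m)\,\mathsf{d}(\xxx)\,r_m(N) = C_1$, as claimed.

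\textbf{Part (ii).} I would apply McDiarmid's bounded-differences inequality to the map $(x_1,\dots,x_N)\mapsto G$ on the Euclidean side. Replacing a single $x_n$ by $x_n'$ perturbs the pushforward empirical measure by $(\delta_{\varphi_m(x_n)}-\delta_{\varphi_m(x_n')})/N$, so by the triangle inequality for $\mathcal{W}_{\mathbb{R}^m}$ the value of $G$ changes by at most $\|\varphi_m(x_n)-\varphi_m(x_n')\|_2/N\leq {\rm L}_{\mathit{u}}(\varphi_m)\,\mathsf{d}(\xxx)/N$, whence
\begin{equation*}
    \mathbb{P}\bigl(|G-\mathbb{E}[G]|>t\bigr) \;\leq\; 2\exp\!\Bigl(-\frac{2N t^{2}}{{\rm L}_{\mathit{u}}(\varphi_m)^{2}\,\mathsf{d}(\xxx)^{2}}\Bigr).
\end{equation*}
To match the exact theorem statement I would then decompose, using $\mathcal{W}_\xxx\leq G/{\rm L}_{\ell}(\varphi_m)$ and $\mathbb{E}[\mathcal{W}_\xxx]\geq \mathbb{E}[G]/{\rm L}_{\mathit{u}}(\varphi_m)$,
\begin{equation*}
    \mathcal{W}_\xxx(\mathbb{P},\mathbb{P}^N) - \mathbb{E}[\mathcal{W}_\xxx(\mathbb{P},\mathbb{P}^N)] \;\leq\; \frac{G - \mathbb{E}[G]}{{\rm L}_{\ell}(\varphi_m)} + \Bigl(\frac{1}{{\rm L}_{\ell}(\varphi_m)} - \frac{1}{{\rm L}_{\mathit{u}}(\varphi_m)}\Bigr)\mathbb{E}[G],
\end{equation*}
whose deterministic residual equals $(\tau(\varphi_m)-1)\mathbb{E}[G]/{\rm L}_{\mathit{u}}(\varphi_m)\leq \tilde{C}_m(\tau(\varphi_m)-1)\mathsf{d}(\xxx)\,r_m(N) = C_2$ by Part (i). Substituting $t={\rm L}_{\ell}(\varphi_m)\varepsilon$ into McDiarmid produces $\mathbb{P}((G-\mathbb{E}[G])/{\rm L}_{\ell}(\varphi_m)>\varepsilon)\leq \exp(-2N\varepsilon^2/(\tau(\varphi_m)^2\mathsf{d}(\xxx)^2))$, and the lower tail is symmetric via the other envelope (and in fact concentrates at the strictly faster rate $\exp(-2N\varepsilon^2/\mathsf{d}(\xxx)^2)$, bounded above by the slower one). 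A union bound supplies the factor of two. The entries of Table~\ref{tab:concentration_main_RATES} and Table~\ref{tab:Euclidean_Distortion} are then Proposition~\ref{prop:EuclideanRep} inserted into this template.

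\textbf{Main obstacle.} The genuinely non-routine step is certifying the explicit constants $\tilde{C}_m$ and regime-dependent rates $r_m(N)$ entering the Euclidean expectation bound---in particular, treating the three cases $m=1$ (parametric rate $N^{-1/2}$), $m=2$ (logarithmic correction), and $m\geq 3$ (the slower rate $N^{-1/m}$) quantitatively sharply for \emph{compactly supported} measures with a prescribed Euclidean diameter, rather than under the moment assumptions in the original Fournier--Guillin formulation. The bi-Lipschitz reduction, the McDiarmid step, and the algebraic decomposition producing $C_2$ are then essentially mechanical.
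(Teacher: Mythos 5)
Your proposal is correct and follows essentially the same route as the paper: transfer to $\mathbb{R}^m$ via the bi-Lipschitz sandwich on Wasserstein distances, apply a compactly-supported Euclidean expectation bound plus a bounded-differences concentration inequality there, and translate back at the cost of $\tau(\varphi_m)$, with the same algebraic decomposition producing $C_2$ and the same rescaling of $\varepsilon$ by ${\rm L}_{\ell}(\varphi_m)$. The only cosmetic difference is that you derive the McDiarmid step and the Euclidean expectation bound explicitly, whereas the paper cites both pre-packaged as \cite[Lemma~B.5]{hou2022instance} (Lemma~\ref{lem:conc_Wassmetric}), which also supplies the explicit constants $\tilde{C}_m$ and rates $r_m(N)$ that you correctly identify as the only non-routine ingredient.
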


The conclusion (ii) suggests that $\mathcal{W}_{\xxx}(\mathbb{P},\mathbb{P}^N)$ concentrates around its expected value, with an additive term influenced by the distortion factor $\tau(\varphi_m)$. This is to be expected, as this term reflects the geometric constraints imposed by the bi-Lipschitz map $\varphi_m$ in representing $\xxx$ in an $m$-dimensional Euclidean space. 
The existence of such a bi-Lipschitz Euclidean embedding for any $m\in\mathbb{N}$, in turn, is guaranteed by Proposition~\ref{prop:EuclideanRep}.

{In Tables~\ref{tab:concentration_main_RATES} and~\ref{tab:Euclidean_Distortion}}, $\tilde{\varepsilon}_{m,k}$ is defined in \eqref{eqdef:tildeeps} below and that the worst-case values of $\tau(\varphi_m)$ are derived from Proposition~\ref{prop:EuclideanRep}.

\begin{table}[H]%
    \centering
    \caption{Concentration rates, dimension constants and worst-case distortion bounds} 
    \resizebox{\columnwidth}{!}{
    \label{tab:concentration_main_RATES}
        \begin{tabular}{@{}llll@{}}
        \toprule
            $\boldsymbol{m}$ & $\boldsymbol{r_m(N)}$ & $\boldsymbol{\tilde{C}_m}$ & 
            \textbf{Worst-case} $\boldsymbol{\tau(\varphi_m)}$\\
		  \midrule
		  $m=1$ & 
            $N^{-1/2}$ & $\frac{1}{\sqrt{8}-2}$ & $12k$
        \\~\\
		  $m=2$ & 
            $\big(32 + \log_2(N)\big)N^{-1/2}$ &         
            $\frac{1}{\sqrt{8}}$ & $12k$ 
        \\~\\
		  $3 \le m \le \lceil 8\log (k)\rceil $  
            & $N^{-1/m}$ &     
            $2\Big(\frac{\frac{m}{2} - 1}{2 (1-2^{1-m/2})}\Big)^{2/m}\Big(1 + \frac{1}{2(\frac{m}{2} - 1)}\Big)\,m^{1/2}$ 
            & 
            $720\, k^{2/m} \lfloor 2\log (k)+1\rfloor \big(\frac{\log (k)}{m}\big)^{1/2} \tilde{\varepsilon}_{m,k}$
        \\~\\
            $\lceil 8\log (k)\rceil < m < 2^k$ 
            & $N^{-1/m}$ & $2\Big(\frac{\frac{m}{2} - 1}{2 (1-2^{1-m/2})}\Big)^{2/m}\Big(1 + \frac{1}{2(\frac{m}{2} - 1)}\Big)\,m^{1/2}$
            & $48 \lfloor 2\log (k) +1\rfloor \,\tilde{\varepsilon}_{m,k}$
        \\~\\
            $2^k \le m$ 
            & $N^{-1/m}$ & $2\Big(\frac{\frac{m}{2} - 1}{2 (1-2^{1-m/2})}\Big)^{2/m}\Big(1 + \frac{1}{2(\frac{m}{2} - 1)}\Big)\,m^{1/2}$
            & $48\lfloor 2\log (k)+1\rfloor$\\ 
        \bottomrule
        \end{tabular} 
        }
\end{table}

\begin{table}[H]%
    \centering
    \caption{Worst-case distortion bound $\tau(\varphi_m)$ when $\xxx\subset \mathbb{R}^d$} 
    \label{tab:Euclidean_Distortion}
        \begin{tabular}{@{}lll@{}}
        \toprule
        $\boldsymbol{m}$ & $\boldsymbol{m < d}$ &          \textbf{Worst-case} $\boldsymbol{\tau(\varphi_m)}$ \\
		\midrule
		$m=1,2$ & Yes & $12k$\\
		$3 \le m \le \lceil 8\log (k)\rceil $ 
        & Yes 
        & $15\,k^{2/m}\big(\frac{\log (k)}{m}\big)^{1/2}$ \\
        $\lceil 8 \log (k)\rceil < m$ 
        & 
        Yes &
        $48 \lfloor 2\log (k) +1\rfloor \,\tilde{\varepsilon}_{m,k}$ \\ 
        $m\ge d$ &
        No &
        $1$\\ 
        \bottomrule
        \end{tabular} 
\end{table}

\begin{remark}
The key quantities in Table~\ref{tab:concentration_main_RATES} are, $r_m(N)$, which quantifies the optimal worst-case Wasserstein concentration rate, and the worst-case value of $\tau(\varphi_m)$, which quantifies the worst-case distortion bound incurred when mapping $\xxx$ to $\mathbb{R}^m$. 
The representation dimension constant $\tilde{C}_m$ carries no physical significance but is instead associated with the measure concentration rate in $\mathbb{R}^m$; see \cite{Kloeckner_2020_CounterCurse}. 
Observable from the table is a tension between the concentration rate $r_m(N)$ and the representation dimension $m$, manifested through $\tilde{C}_m$ and the worst-case $\tau(\varphi_m)$. 
This is because, as a function of $N$, $r_m(N)$ goes to $0$ faster for small $m$ than for large $m$. However, the worst-case $\tau(\varphi_m)$ is exponentially larger when $m$ is small than when $m$ is large, due to the fact that arbitrary finite geometries of $\xxx$ 
tend not to embed efficiently in small Euclidean dimensions. Therefore, for a fixed sample size $N$, the representation dimensions $m$ yielding the most minimal bounds are the ones that balance the value of $r_m(N)$ against those of $\tilde{C}_m$ and the worst-case $\tau(\varphi_m)$. 
\end{remark}

\subsection{A result on adaptive generalization and {reconstruction} bounds}
\label{s:Main__ss:Gen}

We begin by formalizing the setting introduced in Section~\ref{s:Introduction}. 
Let $(\xxx,d_{\xxx})$, $(\yyy,d_{\yyy})$ be two finite metric spaces and $(\xxx\times\yyy, d_{\xxx\times\yyy})$ be their product metric space \eqref{prodmetric}. 
Let $f^{\star}:\xxx\rightarrow \yyy$ be an unknown Lipschitz target function. 
Our objective is to derive statistical guarantees on the 
performance of models for $f^{\star}$ taken from a given hypothesis class $\mathcal{F}\subset\yyy^{\xxx}$. We specify $\mathcal{F}$ to be the following collection of {\it $L$-Lipschitz} hypotheses,
\begin{equation} 
\label{eqdef:compatible}
    \mathcal{F}_{L} \eqdef \{\hat{f}\in \yyy^{\xxx}:{\rm L}_{\mathit{u}}(\hat{f})\leq L\},
\end{equation}
for some $L\geq 0$. 
Let us be supplied with a finite number of i.i.d. training samples 
\begin{equation} \label{eq:datagenerating}
    (X_1,Y_1), \dots , (X_N,Y_N)\sim\mathbb{P},
\end{equation}
where $\mathbb{P}\in\mathcal{P}(\xxx\times\yyy)$ is a data generating probability measure. 
We stipulate that the training samples incorporate noise, which originates exclusively from the output components $Y_n$. Hence, it is expressed in \eqref{eq:datagenerating} that $\mathbb{P}$ encodes two key pieces of information. First, it captures the potential noise that obscures the relationship between inputs $X_n$ and outputs $Y_n$, which would be $f^{\star}(X_n)$ without such noise. 
Second, it accounts for the probability governing the sampling of any point $X\in\xxx$. To decouple these two pieces of information, we impose that the likelihood of sampling points in $\xxx$ is characterizable by a \textit{sampling probability measure} $\mu_{\xxx}\in\mathcal{P}(\xxx)$. With this, we define a probability measure $\mu\in\mathcal{P}(\xxx\times\yyy)$ such that
\begin{equation} \label{eqdef:mu}
    \mu \eqdef (\mathrm{Id}_{\xxx}\times f^{\star})_{\texttt{\#}}\mu_{\xxx},
\end{equation}
where $\mathrm{Id}_{\xxx}$ denotes the identity map on $\xxx$.
Thus, $\mu$ is a joint distribution quantifying the chance of sampling a random point $(X,f^{\star}(X))$ on the \textit{graph} of $f^{\star}$,
where $X$ is drawn according to $\mu_{\xxx}$.
Then the noise that masks the outputs of $f^{\star}$ can be examined through the total variation between $\mathbb{P}$, $\mu$, which is
\begin{equation} \label{eq:TVfinite}
   \mathrm{TV}(\mathbb{P},\mu) = \max_{A \in\{0,1\}^{\xxx}} \big|\mathbb{P}(A)-\mu(A)\big|.
\end{equation}
Considering \eqref{eq:datagenerating}, \eqref{eqdef:mu}, $\mathrm{TV}(\mathbb{P},\mu)$ intuitively functions as {a flexible} estimate of the {\textit{noise level}, capable of accommodating a wide range of noise patterns (for a standard interpretation as an \textit{additive noise}}, a concept rooted in classical {nonparametric} statistics (see Appendix~\ref{a:FurtherDiscussion__ss:GeneralizedAdditiveNoise}){).} {The connection between $f^{\star}$ and the data generating measure\footnote{{While $f^{\star}$ is generally non-unique, its introduction provides a fixed reference for conceptually distinguishing the underlying signal from stochastic corruption.}} $\mathbb{P}$ is controlled by a fixed noise level $0\le \Delta\le 2$, through}
\begin{equation}
\label{eq:SNR}
   \mathrm{TV}(\mathbb{P},\mu)\leq\Delta.
\end{equation}
In other words, $\mathbb{P}$ is the joint distribution of a random sample $(X,Y)$ lying near the graph of $f^{\star}$ with a proximity determined by $\Delta$, 
where $X$ is drawn according to $\mu_{\xxx}$.
Finally, we recall from Section~\ref{s:Introduction} the notions of empirical risk $\hat{\mathcal{R}}(\hat{f})$, the {population} risk $\mathcal{R}(\hat{f})$, defined in \eqref{eqdef:trurisk}, and the excess risk $\mathcal{R}^{\star}(\hat{f})$, defined in \eqref{eqdef:risks}, all of which are evaluated using a Lipschitz loss function $\mathcal{L}: \yyy \times \yyy \rightarrow [0,\infty)$.

Theorem~\ref{thm:MAIN_no_partition} below derives upper bounds for the worst-case generalization gap $\sup_{\hat{f}\in\mathcal{F}_{L}}\big|\mathcal{R}(\hat{f}) - \hat{\mathcal{R}}(\hat{f})\big|$ and the worst-case {reconstruction} gap $\sup_{\hat{f}\in\mathcal{F}_{L}} \big|\mathcal{R}^{\star}(\hat{f}) - \hat{\mathcal{R}}(\hat{f})\big|$, expressed in terms of the noise level $\Delta$ \eqref{eq:SNR}, the Lipschitz bound $L$, and the geometry of $\xxx\times\yyy$, which is accessed via a bi-Lipschitz Euclidean embedding. 
A proof is provided in Section~\ref{sec:proof_MAIN_no_partition}.

\begin{theorem}[Adaptive generalization and {reconstruction} bounds between finite metric spaces] 
\label{thm:MAIN_no_partition}
\hfill\\
Let $(\xxx,d_{\xxx})$, $(\yyy,d_{\yyy})$ be finite metric spaces and $(\xxx\times\yyy,d_{\xxx\times\yyy})$ be their product metric space, with $\mathrm{card}(\xxx\times\yyy)=k$. Let $f^{\star}: \xxx\to\yyy$ be a target Lipschitz function. Let 
$\mathcal{F}_{L}\subset\yyy^{\xxx}$ be the hypothesis class defined in \eqref{eqdef:compatible}. 
Let $\mathcal{L}:\yyy\times \yyy\rightarrow [0,\infty)$ be a Lipschitz loss function. 
Let $\mathbb{P}\in\mathcal{P}(\xxx\times\yyy)$ {satisfy} the bounded-noise assumption~\eqref{eq:SNR} for a fixed $0\le \Delta \le 2$. 
Let $(X_1,Y_1),\dots,(X_N,Y_N)\sim\mathbb{P}$ be i.i.d. random variables. 
Let $m\in \mathbb{N}$ and $\varphi_m:\xxx\times \yyy\to \mathbb{R}^m$ be a bi-Lipschitz embedding. Then for every~$\delta\in (0,1)$, both of the following events hold simultaneously with probability at least $1-\delta$: \\~\\
{\rm (i)} The worst-case generalization gap $\sup_{\hat{f}\in\mathcal{F}_{L}}\big|\mathcal{R}(\hat{f}) - \hat{\mathcal{R}}(\hat{f})\big|$ is at most
\begin{equation*}
    \bar{L}\mathsf{d}(\xxx\times\yyy)
    \bigg(\tilde{C}_m (2\tau(\varphi_m)-1)r_m(N)
    + \frac{\tau(\varphi_m)\sqrt{\log(2/\delta)}}{\sqrt{2N}}\bigg);
\end{equation*}
{\rm (ii)} The worst-case {reconstruction} gap $\sup_{\hat{f}\in\mathcal{F}_{L}} \big|\mathcal{R}^{\star}(\hat{f}) - \hat{\mathcal{R}}(\hat{f})\big|$ is at most
\begin{equation} \label{eq:noiseeffect}
    \bar{L}\mathsf{d}(\xxx\times\yyy)
    \bigg(\Delta + \tilde{C}_m (2\tau(\varphi_m)-1)r_m(N)
    + \frac{\tau(\varphi_m)\sqrt{\log(2/\delta)}}{\sqrt{2N}}\bigg).
\end{equation}
Here, $\bar{L}\eqdef{\rm L}_{\mathit{u}}(\mathcal{L})\max\{1,L\}$, and the values of $r_m(N)$ and $\tilde{C}_m$ are provided in Table~\ref{tab:concentration_main_RATES}. 
Furthermore, the values of $\tau(\varphi_m)$ are not larger than what is recorded in Table~\ref{tab:concentration_main_RATES}, and in the case there exists $d\in\mathbb{N}$ such that $\xxx$ is a metric subspace of $\mathbb{R}^d$, they are bounded by what is recorded in Table~\ref{tab:Euclidean_Distortion}.
\end{theorem}

\begin{remark}
{For clarity, we note the distinction between the two parts of Theorem~\ref{thm:MAIN_no_partition}. While Theorem~\ref{thm:MAIN_no_partition} (i) provides a standard generalization/estimation bound---since both the population risk $\mathcal{R}(\hat{f})$ and the empirical risk $\hat{\mathcal{R}}(\hat{f})$ are defined with respect to the same noisy distribution (see \eqref{eq:noisysamples} and \eqref{eqdef:trurisk})---Theorem~\ref{thm:MAIN_no_partition} (ii) establishes a reconstruction-type guarantee. That is, it quantifies the gap between the empirical risk (based on noisy data) and the excess risk (with respect to the clean distribution). When label noise introduces an irreducible discrepancy $\Delta>0$, this reconstruction gap is not expected to vanish as $N\to\infty$, as reflected in \eqref{eq:noiseeffect}. This is not an inconsistency, but rather an inherent phenomenon.}
\end{remark}

\section{Ramifications on statistical learning}
\label{s:Discussion}
{To place Theorem~\ref{thm:MAIN_no_partition} in context, we discuss its implications for statistical machine learning and compare them with classical generalization bounds for regression on continuous domains and binary classification on finite spaces.}
We focus on scenarios without {noise}, in which case, $\Delta=0$, and $\mathcal{R}^{\star}(\hat{f})=\mathcal{R}(\hat{f})$. Specifically, in Section~\ref{s:Discussion__ss:No_COD}, we compare our results against the \textit{Rademacher-complexity}-type generalization bounds for regression analysis. We demonstrate that in the large sample regime, our generalization bounds consistently overcome the curse of dimensionality, thanks to the constraints imposed by digital computing. 
Notably, this feature is absent in the Rademacher-type bounds. 
{In Section~\ref{s:Discussion_PAC}, we compare our results with PAC-generalization bounds for classifiers on finite spaces derived from VC-theoretic arguments. In this setting, Theorem~\ref{thm:MAIN_no_partition} yields competitive bounds across both practically large and small sample sizes, and remains robust as machine precision or model graphical dimension increases.}

\subsection{Digital computing softens the curse of dimensionality} \label{s:Discussion__ss:No_COD}

One major problem in the statistical learning theory for regression analysis is that many existing generalization bounds are adversely affected by the curse of dimensionality. A prototypical example concerns the generalization bounds for the class of $[0,1]$-valued, $L$-Lipschitz continuous hypotheses on $[0,1]^d$. Denote such class as $\mathrm{Lip}([0,1]^d;[0,1];L)$. In the case of no compromised sampled values, the generalization bounds based on Rademacher complexity \cite[Definition 2]{BartlettMendelson_2002}, as derived from \cite[Theorem 8]{BartlettMendelson_2002} 
and detailed in~\cite[Lemma 25]{hou2022instance}, ensure that for any $\delta \in (0,1]$, 
\begin{equation} \label{eq:Rademacher_term}
    \begin{split}
        \sup_{\hat{f}\in\mathrm{Lip}([0,1]^d;[0,1];L)} \big| 
        \mathcal{R}(\hat{f}) - \hat{\mathcal{R}}(\hat{f})\big| 
        \le
        &\frac{C_{d,L}}{N^{1/(d+3)}}
        + \frac{C'\sqrt{\log (2/\delta)}}{\sqrt{N}}
    \end{split}  
\end{equation}
holds with probability at least $1-\delta$.
Here, $C_{d,L}>0$ depends on $d, L$, as well as the uniform norm of the loss function $\mathcal{L}$, while $C'>0$ is an {absolute} constant; both are explicitly given in \cite[Lemma 25]{hou2022instance}.
Observe that the upper bound in \eqref{eq:Rademacher_term} is hampered by dimension $d$, which generally resists improvement through localization techniques, e.g. \cite{BartlettBousquetMendelson_LocRadCompl_AnnStat_2005} or \cite{hou2022instance}.
In contrast, we demonstrate below that the curse of dimensionality fades when taking digital computing constraints \eqref{eq:Rdpm} into account. Precisely, we consider the discretized setting with $\xxx=\mathbb{R}_{p,M}^d\cap [0,1]^d$, $\yyy=\mathbb{R}_{p,M}^1\cap [0,1]$, where $p$, $M$ are fixed, and the corresponding discretized class $\mathcal{F}_{L+1}$ of $(L+1)$-Lipschitz hypotheses, mapping from $\xxx$ to $\yyy$.
The lowered Lipschitz regularity, from $L$ to $L+1$, is a side effect of discretization. We illustrate this in Lemma~\ref{lem:RoundingLemma_Abstract} in Appendix~\ref{s:Lipschitz_Bounds}. Then following conclusion (i) of Theorem~\ref{thm:MAIN_no_partition}, when the representation dimension of $\xxx\times\yyy$ is $m=1$, it holds for every $\delta \in (0,1)$, 
\begin{equation}
\label{eq:our_bound}
    \sup_{\hat{f}\in \mathcal{F}_{L+1}} \big| \mathcal{R}(\hat{f}) - \hat{\mathcal{R}}(\hat{f})\big|
    \le
    2\bar{L} \mathsf{d}(\xxx\times\yyy)\Big(\tilde{C}_1\frac{c'_k}{\sqrt{N}}
    + \frac{C'_k\sqrt{\log(2/\delta)}}{\sqrt{N}}\Big),
\end{equation}
with probability at least $1-\delta$, where, recall that $\bar{L}={\rm L}_{\mathit{u}}(\mathcal{L})\max\{1,L\}$ and that $k$ represents ${\rm card}(\xxx\times\yyy)$. 
Therefore, unlike the Rademacher bound in \eqref{eq:Rademacher_term}, which {applies in the absence of discretization} and converges at a rate of $\mathcal{O}(1/N^{1/(d+3)})$, the bound in \eqref{eq:our_bound} converges at a rate of $\mathcal{O}(1/N^{1/2})$ in terms of the sample size $N$, independent of the ambient dimension $d$.
{We should note that the constants $c'_k$, $C'_k$ in \eqref{eq:our_bound} can be inferred from the worst-case distortion bounds in Table~\ref{tab:Euclidean_Distortion} to depend \textit{linearly} on $k$. Thus, the bound in \eqref{eq:our_bound} is most relevant in the regime $N\gg k$ (sampling with replacement), where $k$ can be treated as effectively constant relative to $N$.
(For an alternative application of our theory that provides a strategy to mitigate the effect of large $k$, when $N$ is only moderately large, we refer the reader to the {following} Section~\ref{s:Discussion_PAC}.)}

\subsection{Comparison with PAC-learning guarantees} 
\label{s:Discussion_PAC}

\paragraph{{Comparison with VC-type bounds under discretization refinement}}
Let ${\eta}\in (0,1)$. 
Recall that a ${\eta}$-packing of $[0,1]^d$ is a \textit{maximal collection} of points in $[0,1]^d$ such that any two points are at least ${\eta}$ apart, in the sense that adding any point $x\in [0,1]^d\setminus S$ would violate this separation property.  
In learning theory, the cardinality of a ${\eta}$-packing serves as a key measure for quantifying the complexity of a set via its size.  
This cardinality $k$ of the largest ${\eta}$-packing of $[0,1]^d$ is called the {\it ${\eta}$-packing number} of $[0,1]^d$; see \cite[Definitions~14,~16]{galimberti2022designing}. We recall from \citep[Chapter~15, Proposition 1.3]{lorentz1996constructive} that such a number $k$ satisfies
\begin{equation} \label{k}
    {2^{-d}} \,\big(\sqrt{d}/{\eta})^d\leq k\leq 
    {3^d} \big(\sqrt{d}/{\eta} \big)^d.
\end{equation}
Let $k\in\mathbb{N}$ be adhering to \eqref{k}, and let $\xxx\subset [0,1]^d$ be maximal $k$-point packing set, which means 
\begin{equation} \label{packingdistance}
    \sqrt{d}/(2k^{1/d})\leq{\eta}\leq \|x-z\|_2 \leq \sqrt{d}
\end{equation}
for every $x,z\in\xxx$. We consider the class of mappings that map $(\xxx,\|\cdot\|_2)$ to $(\{0,1\},|\cdot|)$. Let $\hat{f}$ be one such mapping, i.e. $\hat{f}\in\{0,1\}^{\xxx}$. 
Then from \eqref{packingdistance},
\begin{equation} \label{eq:WC_Lip_Constant__all_of_X}
    {\rm L}_{\mathit{u}}(\hat{f})
    \le 
    \frac{\max_{y,u\in \{0,1\}} |y-u|}{\min_{x,z\in \xxx;\,x\neq z} \|x-z\|_2}
    \le \frac{2k^{1/d}}{\sqrt{d}}. 
\end{equation}
By using the estimate \eqref{eq:WC_Lip_Constant__all_of_X}, we can leverage Theorem~\ref{thm:MAIN_no_partition} to uniformly bound the worst-case generalization gap over the entire class of binary classifiers on $(\xxx,\|\cdot\|_2)$.  While our result is designed for hypothesis subclasses with pre-specified bounded Lipschitz constants, this broader applicability is an immediate corollary. 
We {contrast} the derived generalization bound against the VC-theoretic Occam's Razor bound \citep[Corollary 4.6]{shalev2014understanding}, formulated as 
\begin{equation} \label{Occam}
    \sup_{\hat{f}\in \{0,1\}^{\xxx}} \big|\mathcal{R}(\hat{f}) - \hat{\mathcal{R}}(\hat{f})\big|
    \le 
    \frac{\sqrt{\log(2/\delta) + k\log(2)}}{\sqrt{2N}}.
\end{equation}
{(We have used that the relevant set of classifiers is $\{0,1\}^{\mathscr{X}}$, whose cardinality is $2^k$, given that $\mathscr{X}$ is a $k$-point set.)}
{When the machine precision becomes arbitrarily accurate; i.e. if $p\to\infty$ in \eqref{eq:Rdpm}, the Occam's razor bound cannot be controlled uniformly, due to the term $\sqrt{k\log(2)}$ in~\eqref{Occam}.
In contrast, the generalization bound derived via our theory provides greater stability.
Particularly, in the case of high dimensions, 
we consider the special case of Theorem~\ref{thm:MAIN_no_partition} with the representation dimension $m= d$.
Using Table~\ref{tab:Euclidean_Distortion}, we choose $\varphi_m=\varphi_d$ as the identity mapping on $\mathbb{R}^m$ to obtain $\tau(\varphi_m)=1$.
Then Theorem~\ref{thm:MAIN_no_partition}, along with \eqref{k}, \eqref{packingdistance}, \eqref{eq:WC_Lip_Constant__all_of_X}, implies that for each $\delta \in (0,1)$,}
\begin{equation} \label{eq:ourRademacher}
    {\sup_{\hat{f}\in \{0,1\}^{\xxx}} }
    \big |\mathcal{R}(\hat{f})-\hat{\mathcal{R}}(\hat{f})\big|
    \le 
     \frac{2k^{1/d}}{{\sqrt{d}}} \Big(\frac{4\sqrt{d}}{N^{1/d}} +\frac{\sqrt{\log(2/\delta)}}{\sqrt{N}}\Big) {\leq \frac{6}{\eta} \Big(\frac{4\sqrt{d}}{N^{1/d}} +\frac{\sqrt{\log(2/\delta)}}{\sqrt{N}}\Big)}
\end{equation}
{holds with probability at least $1-\delta$}. Here, we have used the fact that $\tilde{C}_m=\tilde{C}_d\le 4\sqrt{d}$ for {$m=d>2$}. 
{Thus, for fixed $\delta\in (0,1)$, \eqref{eq:ourRademacher} exhibits greater uniform robustness than \eqref{Occam}, which remains scaled explicitly with $k$.}

{Consequently, our worst-case generalization bound automatically selects between a fast-converging, cardinality-dependent rate (as in \eqref{eq:our_bound}) and a slower-converging, cardinality-stable rate (as in \eqref{eq:ourRademacher}). By allowing the full range of embedding dimensions as in Theorem~\ref{thm:MAIN_no_partition}(i), this adaptivity captures additional nuances.}

\paragraph{{Comparison with Rademacher-type bounds for Lipschitz hypotheses}}

{We offer an example that incorporates the Rademacher complexity of Lipschitz hypotheses in a graph-based setting.
Let $G=(\mathscr{X},E)$ be a simple, unweighted graph on a $k$-point vertex set $\mathscr{X}$. The graph $G$ naturally induces a finite metric space $(G,d_G)$, where $d_G$ denotes the \textit{shortest path metric} $d_G$. 
For $x\in\mathscr{X}$ and $r\geq 0$, let $B(x,r) \eqdef \{z\in\mathscr{X}: d_G(x,z)\leq r\}$ be a \textit{(closed) ball} of radius $r$ in this metric space. Then the \textit{(metric) doubling constant} of $(G,d_G)$ is the \textit{smallest} integer $\mathsf{K}_G\in\mathbb{N}$ such that every closed ball of radius $r$ can be covered by at most $\mathsf{K}_G$ closed balls of radius $r/2$. 
Now fix $L>0$. By~\cite[Theorem 4.3]{GottliebAryehKrauthgamer2016TCS}, the Rademacher complexity of the class of $L$-Lipschitz functions $f:(G,d_G)\to[-1,1]$---denoted $\operatorname{Lip}(G;[-1,1];L)$---with respect to a sample $(x_1,\dots,x_N)$, where $x_i\in\mathscr{X}$, is at most $CLN^{-1/(1+\log_2(\mathsf{K}_G))}$, for some absolute constant $C>0$.
Subsequently, applying \cite[Theorem 5]{bartlett2002rademacher}, we obtain with probability at least $1-\delta$ that  
\begin{equation} 
\label{RademacherFail1}
    \sup_{\hat{f}\in \operatorname{Lip}(G;[-1,1];L)} \big|\mathcal{R}(\hat{f}) - \hat{\mathcal{R}}(\hat{f})\big| 
    \leq \frac{CL}{N^{1/(1+\log_2(\mathsf{K}_G))}} + \frac{\sqrt{\log(1/\delta)}}{\sqrt{N}}.
\end{equation} 
(One can observe the similarity between \eqref{RademacherFail1} and \eqref{eq:Rademacher_term}, except that here we are dealing with a finite $L$-Lipschitz hypothesis class.) 
For $k\geq 2$, we trivially have $\mathsf{K}_G\leq k$, in the absence of further information on $G$. Thus $\log_2(\mathsf{K}_G)\leq \log_2(k)$. Substituting this into \eqref{RademacherFail1} yields:
\begin{equation} 
\label{RademacherFail2}
    \sup_{\hat{f}\in \operatorname{Lip}(G;[-1,1];L)} \big|\mathcal{R}(\hat{f}) - \hat{\mathcal{R}}(\hat{f})\big| 
    \leq \frac{CL}{N^{1/(1+\log_2(k))}} + \frac{\sqrt{\log(1/\delta)}}{\sqrt{N}}.
\end{equation} 
{Consequently, as in the example of Section~\ref{s:Discussion__ss:No_COD}, for sufficiently large graphs with bounded diameter, the convergence rate in \eqref{RademacherFail2} is strictly slower than the optimal asymptotic rate $\mathcal{O}(1/\sqrt{N})$ guaranteed by Theorem~\ref{thm:MAIN_no_partition}, particularly when $N\in\Omega(k^2)$.}



\section{Application to deep learning on digital computers}
\label{s:Applications}

We apply Theorem~\ref{thm:MAIN_no_partition} to examine the impact of digital computing constraints on the statistical learning theory of ReLU deep neural networks, as detailed in Corollary~\ref{cor:MLPDiscretetization} below, {\textit{in the absence of noise}}.
As for the noisy variation, Theorem~\ref{thm:MAIN_no_partition} (ii) ensures that our upper bounds on the worst-case {reconstruction} gap differ from those on the worst-case generalization gap by an additive factor of $\bar{L}\mathsf{d}(\xxx\times\yyy)\Delta$, where $\Delta$ is the noise level defined in~\eqref{eq:SNR}.

We begin by describing our selection process for the target and hypothesis classes as well as the underlying rationale in Section~\ref{sec:appsetting}. 
Then we present the corollary statement in Section~\ref{sec:appresult}, along with a note in Remark~\ref{rem:significance} on the implicated immunity to the curse of dimensionality in learning with digital computers. 



\subsection{Setting} \label{sec:appsetting}

\paragraph{Input and output spaces} 

We consider {the grid discretization $\mathbb{R}_{p,M}^d$ of $\mathbb{R}^d$ \eqref{eq:Rdpm}} introduced in Section~\ref{s:Introduction}, {equipped with the Euclidean distance}. Here, $p, M$ are implicitly determined by the specific digital hardware used to implement the learning problem.
{In the compressed sensing~\cite{choi2021sparse,gross2024sparse} and sparse grids literature~\cite{pfluger2010spatially}, one circumvents the curse of dimensionality by focusing on function classes of low-complexity structures that are learnable from a small number of samples. Motivated by this, we adopt the following assumption\footnote{{
Assumption~\ref{ass:sparse_support} also aligns with common structural assumptions in learning theory. The intent is to restrict the geometry of the sampling distribution without imposing additional smoothness on the target function.}} for $\mu_{\xxx}\in \mathcal{P}(\mathbb{R}_{p,M}^d)$.}
{\begin{assumptions}[Sparse support for sampling measure]
\label{ass:sparse_support}
There exist constants $C\geq 1$, $s\geq 0$ such that 
\begin{equation*}
    {\rm card}({\rm supp}(\mu_{\xxx})) \leq Cd^s,
\end{equation*}
where ${\rm supp}(\mu_{\xxx})$ denotes the support of $\mu_{\xxx}$.
\end{assumptions}}

{Note that $\mathscr{X}={\rm supp}(\mu_{\xxx})$. We set $\mathscr{Y}=\mathbb{R}^1_{p,M}$.}

\paragraph{Idealized target functions}
From elementary statistical learning theory, we know that a function class is not PAC-learnable if its members are of arbitrary complexity. For instance, a set of classifiers is not PAC-learnable if the class has infinite VC dimension, see e.g.~\cite[Theorem 6.7]{shalev2014understanding}. Similar combinatorial complexity characterizations of PAC-learnability are available in the regression context, see e.g.~\cite{attias2023optimal}. Consequently, to avoid such pathologies, we restrict to idealized target functions belonging to a class of {\it bounded complexity}. 

The idealized target class we explore here comprises finite combinations of a piecewise linear Riesz basis in $L^2([0,1])$ that is particularly suited to deep learning theory. We start by describing this basis in dimension one, which consists of piecewise linear ``cosine" and ``sine" functions, defined as follows. 
Let $\mathcal{C},\mathcal{S}: [0,1]\to\mathbb{R}$ be such that
\begin{equation*}
    \mathcal{C}(t)
    \eqdef 
    \begin{cases}
        1 - 4t & \mbox{ if } t\in [0,1/2)\\
        4t - 3 & \mbox{ if } t \in [1/2,1]
    \end{cases},
\end{equation*}
and
\begin{equation*}
    \mathcal{S}(t)
    \eqdef 
    \begin{cases}
        4t & \mbox{ if } t\in [0,1/4)\\
        2 - 4t & \mbox{ if } t\in [1/4,3/4)\\
        4t - 4 & \mbox{ if } t \in [3/4,1]
    \end{cases}.
\end{equation*}
Next, for each $j\in\mathbb{N}$ we define the ``higher frequency" versions of these functions on $\mathbb{R}$ to be 
\begin{equation*}
    \mathcal{C}_j(t)
    \eqdef 
    \mathcal{C}(jt - \lfloor jt\rfloor )
    \quad\text{ and }\quad
    \mathcal{S}_j(t)
    \eqdef 
    \mathcal{S}(jt - \lfloor jt\rfloor).
\end{equation*}
By \cite[Proposition 6.1]{daubechies2022nonlinear}, $\{\mathcal{C}_j,\mathcal{S}_j\}_{j\in\mathbb{N}}$ constitutes a Riesz basis of $L^2([0,1])$. Fix $K,{\mathbf{M}}\in\mathbb{N}$, which quantify the number of frequencies and magnitude respectively. Let $\Lambda\subset\mathbb{N}$ be a dictionary with $\mathrm{card}(\Lambda)=K$. The idealized target class in dimension one is defined to be\footnote{The bounded complexity is reflected in the finite linear combinations indexed over a fixed finite set, with linear coefficients confined to a bounded interval.} 
\begin{equation} \label{eq:conceptclassdim1}
    \big\{ f = \sum_{\lambda \in \Lambda} 
    (a_{\lambda}\mathcal{C}_{\lambda} + b_{\lambda} \mathcal{S}_{\lambda}): |a_{\lambda}|,|b_{\lambda}|\le {\mathbf{M}} \big\}.
\end{equation}
A multivariate version of \eqref{eq:conceptclassdim1} is straightforward to define. This extension encompasses functions $f:\mathbb{R}^d\rightarrow\mathbb{R}$ of the following form
\begin{equation} \label{eq:conceptclass}
    f(x) = 
    \sum_{i=1}^d f_i((Ux)_i),
\end{equation}
where $(x)_i$ denotes the $i^{th}$ component of $x\in\mathbb{R}^d$, the matrix $U\in\mathbb{R}^{d\times d}$ belongs to the orthogonal group $O(d)$, and further, each $f_i$ is an element of \eqref{eq:conceptclassdim1}.

\paragraph{Function discretization}

The quantization of a mapping $f:\mathbb{R}^d\to\mathbb{R}$ in digital computers is achieved by restricting inputs to $\mathbb{R}^d_{p,M}$ and rounding the outputs of $f$ so that they fall within $\mathbb{R}^1_{p,M}$. 
Consequently, we will work with target functions mapping from $\mathbb{R}^d_{p,M}$ to $\mathbb{R}^1_{p,M}$ taking the form $x \mapsto \Pi\circ f(x{|_{\mathscr{X}}})$, where $f$ belongs to a specific idealized class of bounded complexity chosen above. 
Here, $\Pi:\mathbb{R}\to\mathbb{R}^1_{p,M}$ denotes a fixed choice of a nearest neighbor rounding operation, sending each $y\in\mathbb{R}$ to $u\in \mathbb{R}^1_{p,M}$ such that $\|u-y\|$ is minimized, i.e.
\begin{equation*} 
    \Pi(y) 
    \in
    \operatorname{argmin}
    _{u\in \mathbb{R}^1_{p,M}} \|u-y\|.
\end{equation*}
For instance, if $d=1$ and $p=0$, then $\Pi$ can correspond to either the rounding up operation (i.e.\ the integer ceiling) or the rounding down operation (i.e.\ the integer floor).


\paragraph{Target class and hypothesis class}

The aforementioned functional learnability is underlined by the implementability of \eqref{eq:conceptclassdim1} through a ReLU NN. 
We denote by $\mathfrak{L}_{\mathbf{W},\mathbf{L}, \mathbf{B}}$ the class of functions learnable by a ReLU NN with $\mathbf{W}\in\mathbb{N}$ width, $\mathbf{L}\in\mathbb{N}$ layers, and $\mathbf{B}$ bound in weight-bias complexity. Specifically, each $f\in \mathfrak{L}_{\mathbf{W},\mathbf{L},\mathbf{B}}$ admits the representation
\begin{alignat*}{2}
    &f(x)
    && = 
    A^{(\mathbf{L})}x^{(\mathbf{L})},\\
    &x^{(l+1)} && = \mathrm{ReLU}\bullet (A^{(l)} x^{(l)} + b^{(l)} )
    \quad \text{ for }\quad l=0,\dots,\mathbf{L}-1,\\
    &x^{(0)} &&= x.
\end{alignat*}
Here, $A^{(l)}\in\mathbb{R}^{d_{l+1}\times d_l}$ with $|A_{i,j}^{(l)}|,|b^{(l)}_j|\le \mathbf{B}$, $b^{(l)}\in\mathbb{R}^{d_{l+1}}$, and $d_l\le \mathbf{W}$, $d_0=d$, $d_{\mathbf{L}}=1$. As demonstrated in \citep[Theorem 6.2]{daubechies2022nonlinear}, for each $\mathbf{W}\geq 6$, 
\begin{equation} \label{eq:Daubechies}
    \big\{ f = \sum_{\lambda \in \Lambda} 
    (a_{\lambda}\mathcal{C}_{\lambda} + b_{\lambda} \mathcal{S}_{\lambda}): |a_{\lambda}|,|b_{\lambda}|\leq {\mathbf{M}} \big\} \subset \mathfrak{L}_{\mathbf{W},\mathbf{L}, \mathbf{B}},
\end{equation}
with
\begin{equation} \label{eq:LB}
    \mathbf{L} = 2(\lceil \Lambda^{\star}\rceil + 2)\bigg\lfloor \frac{K}{\lfloor \frac{\mathbf{W}-2}{4}\rfloor}\bigg\rfloor, \quad\text{ and }\quad
    \mathbf{B} = \max_{\lambda\in\Lambda} \{a_{\lambda},b_{\lambda},8\},
\end{equation}
where {$\Lambda^{\star} \eqdef \max_{\lambda \in \Lambda}\, \lambda$}.
Hence, it follows from \eqref{eq:Daubechies}, \eqref{eq:LB} that, for each $\mathbf{W}\geq 6$, each member $f$ of \eqref{eq:conceptclass} is realizable by a ReLU NN with $\mathbf{W}$ width, $\mathbf{L}+1$ layers, and $\mathbf{B}$ bound. Subsequently, we establish our target class $\mathcal{T}$ and hypothesis class $\mathcal{F}$ to be, respectively
\begin{equation} \label{eqdef:CFclass}
    \mathcal{T} =\Big\{\Pi\circ f{|_{\mathscr{X}}}: f \text{ belongs to } \eqref{eq:conceptclass} \Big\}
    \quad\text{ and }\quad
    \mathcal{F} = \Big\{ \Pi\circ f{|_{\mathscr{X}}}: f\in \mathfrak{L}_{6,\mathbf{L}+1,\mathbf{B}}\Big\},
\end{equation}
where $\mathbf{B}$ is as in \eqref{eq:LB} and $\mathbf{L} = 2K(\lceil \Lambda^{\star}\rceil + 2)$.

{We briefly remark that in this simplified setting, only the inputs and outputs are discretized, while internal layer computations are performed with high precision. Although this captures only a partial aspect of realistic digital computation, considering this hypothesis class already highlights the benefits of discretization for statistical learning, thus providing an initial insight into neural network generalization on digital computers.}

\subsection{Result} \label{sec:appresult}

The following corollary is a straightforward application of Theorem~\ref{thm:MAIN_no_partition}. 
A complete proof is presented in Appendix~\ref{s:Lipschitz_Bounds}.

\begin{corollary}[Generalization bounds for ReLU NNs on digital computers] \label{cor:MLPDiscretetization} \hfill\\
Let $\xxx={{\supp}(\mu_{\xxx})\subset\mathbb{R}^d_{p,M}}$ and $\yyy=\mathbb{R}^1_{p,M}$. Let the target class $\mathcal{T}$ and the hypothesis class $\mathcal{F}$ be as given in \eqref{eqdef:CFclass}. Let $\mathcal{L}:\yyy\times \yyy\rightarrow [0,\infty)$ be a $1$-Lipschitz loss function. Let $f^{\star}\in \mathcal{T}$ be a target function, and let the samples 
\begin{equation*}
    (X_1,f^{\star}(X_1)), \dots, (X_N,f^{\star}(X_N))
\end{equation*}
be given, where $X_n\sim \mathrm{Unif}(\mathbb{R}^d_{p,M})$. Then {under Assumption~\ref{ass:sparse_support}}, for every $\delta\in (0,1)$, it holds with probability at least $1-\delta$ that the worst-case generalization gap $\sup_{\hat{f}\in \mathcal{F}} \big|\mathcal{R}(\hat{f})-\hat{\mathcal{R}}(\hat{f})\big|$ is bounded above by a constant multiple of
\begin{equation}
\label{eq:ReLUGenBound}
    \frac{
        {d^s 2^{p+1}}
    \sqrt{d}\,
        {M^2}
    K\lceil \Lambda^{\star}\rceil\mathbf{B}\sqrt{\log(2/\delta)}}{\sqrt{N}}.
\end{equation}
\end{corollary}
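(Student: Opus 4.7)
The plan is to apply Theorem~\ref{thm:MAIN_no_partition} with a careful choice of representation dimension, together with explicit estimates of the cardinality, diameter, and worst-case Lipschitz constant of the hypothesis class, so that all the constants collapse into the explicit form of \eqref{eq:ReLUGenBound}.

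First I would verify that the noise-free branch of the main theorem applies. Since the labels satisfy $Y_n = f^\star(X_n)$ with $X_n\sim \mathrm{Unif}(\mathbb{R}^d_{p,M})$, the law $\mathbb{P}$ of $(X_n,Y_n)$ coincides with the graph measure $\mu = (\mathrm{Id}_\xxx\times f^\star)_{\#}\mu_\xxx$ of~\eqref{eqdef:mu}. Hence the bounded-noise condition \eqref{eq:SNR} holds with $\Delta = 0$, and part~(i) of Theorem~\ref{thm:MAIN_no_partition} is directly applicable. Moreover $f^\star\in\mathcal{T}\subset\mathcal{F}$ by virtue of the inclusion \eqref{eq:Daubechies}.

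Next I would specialize the geometry. Since $\xxx\times\yyy\subset\mathbb{R}^{d+1}$, I invoke Table~\ref{tab:Euclidean_Distortion} with $m=1$ so as to harvest the parametric Wasserstein rate $r_1(N)=N^{-1/2}$ and the small constant $\tilde{C}_1 = 1/(\sqrt{8}-2)$; this comes at the cost of a distortion bound $\tau(\varphi_1)\le 12k$, where $k\eqdef \mathrm{card}(\xxx\times\yyy)$. Counting the distinct dyadic rationals $a/2^j$ in each coordinate of $\mathbb{R}^d_{p,M}$ gives $k=\mathcal{O}\bigl(M^{d+1}2^{(p+1)(d+1)}\bigr)$. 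The product metric \eqref{prodmetric} combined with the Euclidean diameter bounds $\mathsf{d}(\xxx)\le 2M\sqrt{d}$ and $\mathsf{d}(\yyy)\le 2M$ yields $\mathsf{d}(\xxx\times\yyy) = \mathcal{O}(M\sqrt{d})$.

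Third, I would produce a Lipschitz bound $L$ for $\mathcal{F}$. For any $\hat{f}=\Pi\circ f|_{\mathbb{R}^d_{p,M}}$ with $f\in\mathfrak{L}_{6,\mathbf{L}+1,\mathbf{B}}$, the rounding lemma (Lemma~\ref{lem:RoundingLemma_Abstract} in the Appendix) gives $\mathrm{L}_u(\hat{f})\le \mathrm{L}_u(f)+1$ once the $2^{-p}$-spaced grid structure of $\xxx$ is exploited. Combining \eqref{eq:LB} (which with $\mathbf{W}=6$ forces $\mathbf{L}+1 = 2K(\lceil\Lambda^\star\rceil+2)+1$) with the weight-magnitude bound $\mathbf{B}$ and the structural Lipschitz estimate baked into the Daubechies realization \eqref{eq:Daubechies}, I obtain $L = \mathcal{O}(K\lceil\Lambda^\star\rceil\mathbf{B})$. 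Since $\mathcal{L}$ is $1$-Lipschitz, $\bar{L} = \max\{1,L\} = \mathcal{O}(K\lceil\Lambda^\star\rceil\mathbf{B})$.

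Plugging the four estimates into Theorem~\ref{thm:MAIN_no_partition}(i), the term $\bar{L}\,\mathsf{d}(\xxx\times\yyy)\,\tau(\varphi_1)\sqrt{\log(2/\delta)/(2N)}$ dominates and yields a constant multiple of \eqref{eq:ReLUGenBound}; the deterministic $\tilde{C}_1(2\tau(\varphi_1)-1)r_1(N)$ term has the same order and is absorbed into the same constant. The main obstacle is the third step: the naive product-of-operator-norms bound for ReLU networks is $(\mathbf{W}\mathbf{B})^{\mathbf{L}+1}$, which is \emph{exponential} in $K\lceil\Lambda^\star\rceil$ and would irreparably destroy the linear dependence on $K\lceil\Lambda^\star\rceil\mathbf{B}$ appearing in \eqref{eq:ReLUGenBound}. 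Overcoming this requires an architecture-specific Lipschitz estimate tailored to the piecewise-linear Riesz construction realizing \eqref{eq:conceptclass}, combined with the grid-regularization afforded by $\Pi$ through Lemma~\ref{lem:RoundingLemma_Abstract}; this is the one substantive piece of work outside of bookkeeping.
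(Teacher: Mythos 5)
Your overall route is the same as the paper's: reduce to Theorem~\ref{thm:MAIN_no_partition}(i) with $\Delta=0$, plug in $\mathrm{card}(\mathbb{R}^d_{p,M}\times\mathbb{R}^1_{p,M})=(2^{p+1}M)^{d+1}$ and $\mathsf{d}(\xxx\times\yyy)\le 2(\sqrt{d}+1)M$, take a low representation dimension so that the distortion enters as $\mathcal{O}(k)$ and the rate is $N^{-1/2}$, and bound the Lipschitz constant of the discretized hypotheses via Lemma~\ref{lem:RoundingLemma_Abstract}. All of that bookkeeping matches the paper and reproduces \eqref{eq:ReLUGenBound}.

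The one step you explicitly leave open --- the claim $L=\mathcal{O}(K\lceil\Lambda^{\star}\rceil\mathbf{B})$ for the networks in $\mathfrak{L}_{6,\mathbf{L}+1,\mathbf{B}}$ --- is precisely where the paper's argument is thinnest, and your diagnosis of the difficulty is accurate. The paper bounds the Lipschitz constant of a single layer $x\mapsto A^{(l)}\mathrm{ReLU}\bullet x+b^{(l)}$ by $\mathbf{B}\mathbf{W}^2$ and then asserts ${\rm L}_{\mathit{u}}(f)\le \mathbf{B}\mathbf{L}\mathbf{W}^2$ for $f\in\mathfrak{L}_{\mathbf{W},\mathbf{L},\mathbf{B}}$, i.e.\ it \emph{multiplies} the per-layer constant by the depth rather than composing it multiplicatively; with $\mathbf{W}=6$ and $\mathbf{L}=2K(\lceil\Lambda^{\star}\rceil+2)$ this gives ${\rm L}_{\mathit{u}}(f)\le 72K(\lceil\Lambda^{\star}\rceil+2)\mathbf{B}$, and Lemma~\ref{lem:RoundingLemma_Abstract} adds $+1$ after rounding. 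As you correctly observe, the generic composition bound for Lipschitz maps is the \emph{product} of the layerwise constants, which would be $(\mathbf{B}\mathbf{W}^2)^{\mathbf{L}}$ and exponential in $K\lceil\Lambda^{\star}\rceil$; the paper supplies no architecture-specific argument (e.g.\ exploiting that the Daubechies realization of \eqref{eq:conceptclass} is a sum of sawtooth blocks rather than a deep composition of expansive maps) to justify the linear-in-depth claim for \emph{all} of $\mathfrak{L}_{6,\mathbf{L}+1,\mathbf{B}}$. So your proposal does not resolve this step, but neither does the paper in any way that goes beyond what you already wrote; if you want a complete proof of the corollary as stated, you would indeed need to either restrict $\mathcal{F}$ to the specific Riesz-basis realizations or supply the tailored Lipschitz estimate you describe.
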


\begin{remark} 
\label{rem:significance} 
Most available PAC-learnability guarantees for ReLU NNs on subsets of $\mathbb{R}^d$ derived from Rademacher complexity bound using optimal transport techniques converge at a rate of $\mathcal{O}(1/N^{1/(d+1)})$, thus suffering from the curse of dimension; see e.g.~\cite[Lemma 25 and Theorem 4]{hou2022instance}. 
Alternatively, as shown in~\cite[Theorem 1.1]{bartlett2017spectrally}, spectral/path-norm-type generalization bounds include an additive term of the form $\mathcal{O}(\|X\|_F/N)$, where $\|X\|_F$ is the Frobenius norm of the random matrix whose rows are the input sample data $X_1,\dots,X_N$. 
{For instance,} if $X_1,\dots,X_N$ are i.i.d., sub-Gaussian, and isotropic, then a version of Gordon's majorization theorem in~\cite[Theorem 4.6.1]{RomanhighDimensionalProbBook} implies that, for every $t>0$
\begin{equation} \label{eq:Gordon2}
\mathbb{P}\Big(\|X\|_F \ge \sqrt{\min\{N,d\}}\,
        (\sqrt{N}-c(\sqrt{d}+t))\Big)
        \ge 
        1-2e^{-t^2}.
\end{equation}
Consider the regime where the dimension grows with the sample size, taking $d=\sqrt{N}$ and $t=\sqrt{d} = N^{1/4}$. 
{In this case, \eqref{eq:Gordon2} implies that the additive Frobenius-norm term in the spectral bound of~\cite[Theorem 1.1]{bartlett2017spectrally} satisfies}
\begin{equation} 
\label{eq:LB_Spectral}
    \mathbb{P}\Big(\frac{\|X\|_F}{N} {\geq \frac{c}{N^{1/4}} } \Big)
    \ge 1-2e^{-\sqrt{N}}.
\end{equation}
By contrast, exploiting the sparse support of the sampling measure, our generalization bound \eqref{eq:ReLUGenBound} simplifies in this high-dimensional regime to (up to a log factor)
\begin{equation*} 
    2^{p+1}M^2 K\lceil \Lambda^\star \rceil \mathbf{B} N^{(s-1)/2},
\end{equation*}
which converges {at a faster rate which converges at a faster rate than that given by \eqref{eq:LB_Spectral} under sufficient sparsity, namely when $s\in [0,1/2)$.
}

Lastly, we note that existing generalization bounds for randomized ReLU MLPs, including PAC-Bayesian bounds~\cite{neyshabur2017pac,dziugaitecomputing}, achieve rates of $\mathcal{O}(1/\sqrt{N})$. These results, however, do not extend to our setting, as they apply only to networks with randomized weights and biases, whereas our focus is on standard, non-random ReLU networks.
\end{remark}

\section{Proofs} 
\label{sec:Proofs}

\subsection{Proof of Proposition~\ref{prop:EuclideanRep}} \label{sec:Proofs_Discrete_Atlas}

We organize the construction of the desired Euclidean embeddings into cases based on their dimensions, as follows:
\begin{itemize}
    \item $m=1,2$, the so-called \textit{ultra-low-dimensional} case,
    \item $2< m \le \lceil 8\log (k)\rceil$, the \textit{low-dimensional} case,
    \item $\lceil 8\log (k)\rceil < m \leq 2^k$, the \textit{high-dimensional} case,
    \item $m > 2^k$, the \textit{ultra-high-dimensional} case.
\end{itemize}
We navigate through each case with the help of the subsequent lemmas. 
In particular, Lemma~\ref{lem:ULD} addresses the first case, Lemma~\ref{lem:LD} the second case and Lemma~\ref{lem:HD} the third. Once these foundational results are established, we complete the proof of Proposition~\ref{prop:EuclideanRep}, which includes the fourth case.

\begin{lemma}[Ultra-low-dimensional metric embedding] \label{lem:ULD} \hfill\\ 
Let $(\xxx,d_{\xxx})$ be a finite metric space, with $\mathrm{card}(\xxx)=k$. Then there exist bi-Lipschitz embeddings $\varphi_1: \xxx\to \mathbb{R}$ and $\varphi_2: \xxx \to \mathbb{R}^2$ satisfying, for $x,\tilde{x}\in\xxx$,
\begin{equation} \label{embedding_dim1}
    d_{\xxx}(x,\tilde{x}) \le |\varphi_1(x)-\varphi_1(\tilde{x})|\le 12k\,d_{\xxx}(x,\tilde{x}),
\end{equation}
and 
\begin{equation} \label{embedding_dim2}
    d_{\xxx}(x,\tilde{x}) \le \|\varphi_2(x)-\varphi_2(\tilde{x})\|_2\le 12k\,d_{\xxx}(x,\tilde{x}).
\end{equation}
\end{lemma}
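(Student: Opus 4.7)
The construction of $\varphi_2$ reduces trivially to that of $\varphi_1$: given a bi-Lipschitz $\varphi_1:\xxx\to\mathbb{R}$ satisfying~\eqref{embedding_dim1}, the map $\varphi_2(x)\eqdef (\varphi_1(x),0)$ is a bi-Lipschitz embedding of $\xxx$ into $\mathbb{R}^2$ with identical upper and lower Lipschitz constants, since the inclusion $\mathbb{R}\hookrightarrow\mathbb{R}^2$, $t\mapsto(t,0)$, is an isometry. It therefore suffices to construct $\varphi_1$ with distortion at most $12k$.

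The plan is a cumulative-chain construction. Fix an enumeration $\xxx=\{x_1,\dots,x_k\}$ to be chosen wisely, and set
\[
\varphi_1(x_1)=0,\qquad \varphi_1(x_i)=\sum_{j=1}^{i-1}d_\xxx(x_j,x_{j+1})\quad\text{for } i\geq 2.
\]
The lower bound $|\varphi_1(x_i)-\varphi_1(x_j)|\geq d_\xxx(x_i,x_j)$ is immediate from the iterated triangle inequality applied to this telescoping sum. The real task is to select the enumeration so that the matching upper bound
\[
\sum_{\ell=j}^{i-1}d_\xxx(x_\ell,x_{\ell+1})\leq 12k\,d_\xxx(x_i,x_j)
\]
holds for every pair $1\leq j<i\leq k$.

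My proposal is to take the first-visit order in a depth-first traversal of a minimum spanning tree $T$ of the weighted complete graph on $\xxx$ (with edge weights $d_\xxx$). Two properties of $T$ make this ordering apt: its total weight is at most $(k-1)\mathsf{d}(\xxx)$, bounding the whole chain sum by the DFS-tour length $2\,\mathrm{wt}(T)$; and by the cycle property of MSTs, every edge on the tree path between $x_i$ and $x_j$ has weight at most $d_\xxx(x_i,x_j)$, linking tree-path distances back to the original pairwise distances. Combining these two features through a careful bookkeeping of the DFS tour between the first visits of $x_i$ and $x_j$ should yield the per-pair bound with the claimed absolute constant $12k$. The main obstacle is precisely this bookkeeping: the DFS can detour into subtrees off the $x_i$--$x_j$ tree path, and bounding the contribution of these detours by $O(k\cdot d_\xxx(x_i,x_j))$ requires tracking which subtrees are traversed between the two first visits and exploiting the MST's extremal property to rule out inflated detours. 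As a sanity check, the two classical tight examples for one-dimensional distortion --- the $k$-cycle and the $k$-star --- yield distortions $k-1$ and $2k-1$ respectively under this construction, both comfortably within $12k$. A hands-on alternative that sidesteps the tree combinatorics is an induction on $k$: insert each new point adjacent to its nearest neighbor in the already-embedded set and track how the distortion constant grows with each insertion.
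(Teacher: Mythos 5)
Your reduction of $\varphi_2$ to $\varphi_1$ via the isometric inclusion $t\mapsto(t,0)$ is exactly what the paper does, and the lower bound $|\varphi_1(x)-\varphi_1(\tilde x)|\ge d_{\xxx}(x,\tilde x)$ for a cumulative chain is indeed immediate from the triangle inequality. But the entire content of the lemma is the per-pair upper bound $\sum_{\ell=j}^{i-1}d_{\xxx}(x_\ell,x_{\ell+1})\le 12k\,d_{\xxx}(x_i,x_j)$, and that is precisely the step you leave open; worse, the ordering you propose does not deliver it. Take $\xxx=\{a,b,c,e\}$ with $d(a,c)=1/2$, $d(a,b)=d(b,c)=1$, and $d(e,\cdot)=100$ for the other three points (a valid metric). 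One minimum spanning tree is the path $b-a-c-e$, and the depth-first first-visit order rooted at $a$ with nearest-child-first is $a,c,e,b$; the chain then places $a$ and $b$ at distance $0.5+100+100=200.5$ apart while $d_{\xxx}(a,b)=1$, so this single pair already has distortion exceeding $12k=48$. A scalable variant (a star whose center has one leaf at distance $\epsilon$ and $r$ leaves at distance $1$, pairwise $2$) makes the ratio arbitrarily large for fixed $k$ under an admissible DFS. The trouble is structural: the cycle property of the MST only controls edges \emph{on} the tree path between $x_i$ and $x_j$, whereas the chain sum between their first visits is dominated by detour edges off that path, which the MST property does not bound in terms of $d_{\xxx}(x_i,x_j)$. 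So the ``bookkeeping'' you defer is not bookkeeping --- it is the theorem --- and the recipe as stated is false for natural instantiations. The inductive alternative you sketch has the same unquantified core, since how the constant grows per insertion is the whole question.

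For comparison, the paper does not construct the embedding at all: it cites Theorem~2.1 of \cite{Matouvek_OptimalEuclidean_Npoint}, whose proof produces a non-contracting map into $\mathbb{R}$ with upper Lipschitz constant at most $12k$, and reads the lower bound in \eqref{embedding_dim1} off the non-contracting property of that construction. A self-contained proof would have to reproduce (or genuinely replace) that argument; the MST--DFS chain is not a substitute.
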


\begin{proof}
The conclusion \eqref{embedding_dim1} is a consequence of \citep[Theorem~2.1]{Matouvek_OptimalEuclidean_Npoint} and its proof. The upper bound is in the statement of the said theorem, while the lower bound follows from the fact that the constructed embedding map $\varphi_1$ is a \textit{non-contracting} map (see Statement~2.3 in \cite{Matouvek_OptimalEuclidean_Npoint}). 

In turn, the conclusion \eqref{embedding_dim1} implies \eqref{embedding_dim2}, as $(\mathbb{R},|\cdot|)$ can be isometrically embedded in $(\mathbb{R}^2,\|\cdot\|_2)$ via $x\mapsto (x,0)$. 
\end{proof}

\begin{lemma}[High-dimensional metric embedding] \label{lem:HD} \hfill\\
Let $(\xxx,d_{\xxx})$ be a finite metric space, with $\mathrm{card}(\xxx)=k$. Then for every $\lceil 8\log (k)\rceil <m\leq 2^k$ there exists a bi-Lipschitz embedding $\varphi_m: \xxx \rightarrow \mathbb{R}^m$ such that for $x,\tilde{x}\in\xxx$,
\begin{equation}\label{case2^k}
    \frac{d_{\xxx}(x,\tilde{x})}{48 \lfloor\log_2 k+1\rfloor} \le 
    \|\varphi_{2^k}(x)-\varphi_{2^k}(\tilde{x})\|_2 \le d_{\xxx}(x,\tilde{x}), 
\end{equation}
if $m=2^k$, and moreover,
\begin{align} \label{innercase}
    \frac{\sqrt{1-\varepsilon_{m,k}}}{48 \lfloor\log_2 k+1\rfloor} d_{\xxx}(x,\tilde{x}) 
    \leq 
    \|\varphi_m(x)-\varphi_m(\tilde{x})\|_2 
    \leq \sqrt{1+\varepsilon_{m,k}} \, d_{\xxx}(x,\tilde{x}),
\end{align}
if $\lceil 8\log (k)\rceil <m<2^k$, where 
\begin{equation} \label{eqdef:epsm}
    \varepsilon_{m,k} \eqdef 
    \frac{2\sqrt{2}\sqrt{\log (k)}}{\sqrt{m}}.
\end{equation}
\end{lemma}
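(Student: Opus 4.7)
The plan is to construct the desired embedding in two stages, treating the boundary case $m = 2^k$ first and then deriving the intermediate case $\lceil 8\log(k)\rceil < m < 2^k$ by a random projection argument. For the first stage, I would invoke Bourgain's Metric Embedding Theorem in the explicit Fréchet-type form used in Matoušek's presentation: index coordinates by subsets $A \subset \xxx$ and set the $A$-th coordinate of $\varphi_{2^k}(x)$ to a suitably weighted version of the distance $d_{\xxx}(x, A)$. After normalization, this yields a map into $\mathbb{R}^{2^k}$ that is $1$-Lipschitz (giving the upper bound $\|\varphi_{2^k}(x) - \varphi_{2^k}(\tilde x)\|_2 \le d_{\xxx}(x,\tilde x)$) while Bourgain's argument forces the lower bound by the order $1/\log(k)$. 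Carefully tracking the absolute constants through the standard dyadic-radius argument yields the denominator $48\lfloor \log_2 k + 1\rfloor$, establishing \eqref{case2^k}.

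For the intermediate case, I would apply a version of the Johnson--Lindenstrauss Lemma to the $k$-point set $\varphi_{2^k}(\xxx) \subset \mathbb{R}^{2^k}$. Specifically, I would use a Gaussian (or $\pm 1$) random projection $\Pi_m : \mathbb{R}^{2^k} \to \mathbb{R}^m$ and apply the standard concentration inequality for squared norms of projections, combined with a union bound over the $\binom{k}{2}$ pairs of distinct image points. Choosing the deviation parameter to be $\varepsilon_{m,k} = 2\sqrt{2}\sqrt{\log(k)}/\sqrt{m}$ ensures that the failure probability is strictly less than $1$ under the hypothesis $m > \lceil 8\log(k)\rceil$, so a realization exists for which
\[
    (1 - \varepsilon_{m,k}) \|u - v\|_2^2 \le \|\Pi_m u - \Pi_m v\|_2^2 \le (1 + \varepsilon_{m,k}) \|u - v\|_2^2
\]
holds simultaneously on all $k$ image points. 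The composite $\varphi_m := \Pi_m \circ \varphi_{2^k}$ then inherits the upper bound $\sqrt{1 + \varepsilon_{m,k}} \, d_{\xxx}(x,\tilde x)$ by concatenating with the first-stage upper bound, and the lower bound $\sqrt{1 - \varepsilon_{m,k}}\,d_{\xxx}(x,\tilde x)/(48\lfloor \log_2 k + 1\rfloor)$ by concatenating with the first-stage Bourgain lower bound, yielding \eqref{innercase}.

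The main obstacle I anticipate is not the overall architecture (which is standard Bourgain + Johnson--Lindenstrauss), but rather the precise bookkeeping of explicit constants. For the Bourgain step, the numerical factor $48$ must be extracted from the specific probabilistic choice of sets $A$ in the Fréchet coordinates and the telescoping estimate across $\lfloor \log_2 k + 1\rfloor$ scales; I would follow the proof in Matoušek's lecture notes verbatim and isolate the constants. For the JL step, the exact form $\varepsilon_{m,k} = 2\sqrt{2}\sqrt{\log(k)}/\sqrt{m}$ must be calibrated so that the union bound closes: one wants the exponential tail of the chi-squared concentration of $\|\Pi_m u\|_2^2/\|u\|_2^2$ to beat $\binom{k}{2} \le k^2$, which (using the standard Gaussian deviation estimate $e^{-m\varepsilon^2/8}$) forces $\varepsilon^2 \ge 8\log(k)/m$, matching the claimed value. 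The condition $m > \lceil 8\log(k)\rceil$ is exactly what makes $\varepsilon_{m,k} < 1$ so that the square roots in \eqref{innercase} are well-defined, and the threshold at $m = 2^k$ is where the JL projection becomes unnecessary since $\varphi_{2^k}$ itself already lives in $\mathbb{R}^{2^k}$.
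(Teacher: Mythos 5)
Your proposal follows essentially the same route as the paper's proof: Bourgain's theorem in Matou\v{s}ek's explicit Fr\'echet-coordinate form supplies the embedding into $\mathbb{R}^{2^k}$ with the $1$-Lipschitz upper bound and the lower bound $1/(48\lfloor\log_2 k+1\rfloor)$, and the range $\lceil 8\log(k)\rceil<m<2^k$ is handled by composing with a Johnson--Lindenstrauss projection of the $k$-point image set. The only structural difference is that the paper cites a packaged JL statement (Dubhashi--Panconesi, Theorem~2.1, with admissibility condition $m\ge 4\log(k)/(\varepsilon^2/2-\varepsilon^3/3)$) where you unpack the random projection and union bound by hand.

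One concrete caveat on your calibration of the JL step. With a per-pair failure probability of order $e^{-m\varepsilon^2/8}$ and a union bound over $\binom{k}{2}\le k^2$ pairs, closing the bound requires $m\varepsilon^2/8>2\log k$, i.e.\ $\varepsilon^2\ge 16\log(k)/m$ --- not $8\log(k)/m$ as you assert. So, as written, your argument does not justify the specific value $\varepsilon_{m,k}=2\sqrt{2}\sqrt{\log(k)}/\sqrt{m}$ demanded by the lemma; you would need a sharper per-pair tail (e.g.\ one of order $e^{-m\varepsilon^2/4}$, or the $(\varepsilon^2/2-\varepsilon^3/3)$ exponent of Dasgupta--Gupta) for the exponent to beat $k^2$ at this $\varepsilon$. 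Note that even then the cubic correction makes the condition $m\ge 4\log(k)/(\varepsilon_{m,k}^2/2-\varepsilon_{m,k}^3/3)$ hold only up to that lower-order term --- a delicacy the paper's own proof also passes over when it declares the condition ``readily satisfied'' --- so this is a shared constant-tracking issue rather than a flaw specific to your approach, but your stated arithmetic for why the union bound closes should be corrected.
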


\begin{proof}
The proof of Bourgain's Metric Embedding Theorem \cite{BourgainEmbedding_Original_1985}, as formulated in \citep[Theorem 15.7.1]{Matouvek_LecturesDiscreteGeometry_2002}, establishes the existence of a bi-Lipschitz embedding
$\phi_1: \xxx \rightarrow \mathbb{R}^{2^k}$ such that for $x,\tilde{x}\in\xxx$,
\begin{equation} \label{embedding_2^k}
    \frac{d_{\xxx}(x,\tilde{x})}{48 \lfloor\log_2 k+1\rfloor}\le \|\phi_1(x)-\phi_1(\tilde{x})\|_2 \le d_{\xxx}(x,\tilde{x}).
\end{equation}
Particularly, we note that the constant $(48\lfloor\log_2 k+1\rfloor)^{-1}$ in lower bound in \eqref{embedding_2^k} can be inferred by combining the proof of Theorem~15.7.1 and the statement of \citep[Lemma~15.7.2]{Matouvek_LecturesDiscreteGeometry_2002}. By letting $\varphi_{2^k}\eqdef\phi_1$, we obtain \eqref{case2^k} from \eqref{embedding_2^k}. 

For each other $m$ where $\lceil 8\log (k)\rceil<m<2^k$, we let $\varepsilon_{m,k}$ be as in \eqref{eqdef:epsm}. 
A version of the Johnson-Lindenstrauss lemma, as formulated in \citep[Theorem 2.1]{DubhashiPanconesi_2009_Concentration}, implies the existence of a mapping $\phi_2: \mathbb{R}^{2^k} \rightarrow \mathbb{R}^m$, such that if $u,\tilde{u} \in\mathbb{R}^{2^k}$, then 
\begin{align} \label{embedding_mid_range}
    \sqrt{1-\varepsilon_{m,k}}\,\|u-\tilde{u}\|_2 &\le \|\phi_2(u)-\phi_2(\tilde{u})\|_2 
    \le \sqrt{1+\varepsilon_{m,k}}\,\|u-\tilde{u}\|_2,
\end{align}
as long as
\begin{equation}\label{dim_condition}
    m \geq \frac{4\log (k)}{\varepsilon_{m,k}^2/2-\varepsilon_{m,k}^3/3}. 
\end{equation}
However, \eqref{dim_condition} is readily satisfied by the selection \eqref{eqdef:epsm}. Therefore \eqref{embedding_mid_range} holds. By letting $\varphi_m\eqdef \phi_2\circ\phi_1 = \phi_2\circ\varphi_{2^k}$ and combining \eqref{embedding_2^k}, \eqref{embedding_mid_range}, we obtain
\begin{align*}
    \frac{\sqrt{1-\varepsilon_{m,k}}}{48 \lfloor\log_2 k+1\rfloor}\,d_{\xxx}(x,\tilde{x}) &\le \|\varphi_m(x)-\varphi_m(\tilde{x})\|_2 
    \le \sqrt{1+\varepsilon_{m,k}} \,d_{\xxx}(x,\tilde{x}),
\end{align*}
for every $x,\tilde{x}\in\xxx$, which is \eqref{innercase}, as desired. 
\end{proof}

Note that the integer range from $\lceil 8\log (k)\rceil + 1$ to $2^k$ includes $m=k$. Hence, as a result of Lemma~\ref{lem:HD}, there exists a bi-Lipschitz embedding $\varphi_k: \xxx \to \mathbb{R}^k$ such that for every $x,\tilde{x}\in\xxx$,
\begin{align} \label{casem=k}
    \frac{\sqrt{1-\varepsilon_{k,k}}}{48 \lfloor\log_2 k+1\rfloor} d_{\xxx}(x,\tilde{x}) \le \|\varphi_k(x)-\varphi_k(\tilde{x})\|_2
    \le \sqrt{1 + \varepsilon_{k,k}}\,d_{\xxx}(x,\tilde{x}).
\end{align}
Therefore, we can assume that the points in $\xxx$ are already situated within the Euclidean space $(\mathbb{R}^k,\|\cdot\|_2)$. This enables us to apply the following known result. 

\begin{lemma}[Metric embedding in low-dimensional Euclidean space] \label{lem:LD_Euclidean} \hfill\\
Let $\xxx$ be an $k$-point subset of $(\mathbb{R}^k,\|\cdot\|_2)$. Then for every $3\leq m\leq \lfloor\log (k)\rfloor$, there exists a mapping $\phi_m: \mathbb{R}^k \to \mathbb{R}^m$ such that for $x,\tilde{x}\in\xxx$:
\begin{align} \label{eq:lowdimBourgain}
    \frac{\|x-\tilde{x}\|_2}{3k^{2/m}\sqrt{k/m}} 
    \leq \|\phi_m(x)-\phi_m(\tilde{x})\|_2 
    \leq 5\sqrt{\log (k)/k} \, \|x-\tilde{x}\|_2.
\end{align}
\end{lemma}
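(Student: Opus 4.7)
The plan is to construct $\phi_m$ as a random linear projection and then extract a good realization by the probabilistic method. Because $\xxx$ has only $k$ points, there are at most $\binom{k}{2}\le k^2/2$ pairwise difference vectors $v = x-\tilde x$ to control, so a union bound is affordable as long as each per-pair failure probability is a small multiple of $k^{-2}$. Concretely, I would take $G$ to be an $m\times k$ matrix of i.i.d.\ standard Gaussian entries and set $\phi_m(x)\eqdef \tfrac{1}{\sqrt{k}}\,Gx$. For any fixed nonzero $v\in \mathbb{R}^k$ the ratio $Z_v\eqdef \|Gv\|_2^2/\|v\|_2^2$ is then distributed as $\chi_m^2$ independently of the direction of $v$, and after squaring and absorbing the factor $1/\sqrt{k}$, the two-sided bound \eqref{eq:lowdimBourgain} reduces to the deviation statement
\begin{equation*}
\tfrac{m}{9\,k^{4/m}} \;\le\; Z_v \;\le\; 25\log(k)
\end{equation*}
holding uniformly for every pairwise difference $v$ of points in $\xxx$.

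For the upper tail I would invoke the standard sub-exponential tail estimate $\Pr[\chi_m^2\ge 2t]\le e^{-t/2}$ valid for $t\ge m$. Because $m\le \lfloor\log (k)\rfloor$, choosing $2t = 25\log(k)$ gives a per-pair failure probability bounded by $k^{-25/4}$, so a union bound over the $\binom{k}{2}$ pairs keeps the total upper-tail failure probability well below $1/4$. Taking square roots and reinserting the normalization $1/\sqrt{k}$ recovers the upper inequality of \eqref{eq:lowdimBourgain} with the prescribed constant $5$.

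The lower tail is the delicate step, because near the origin $\chi_m^2$ enjoys only polynomial concentration. Integrating the density $z^{m/2-1}e^{-z/2}/(2^{m/2}\Gamma(m/2))$ on $[0,t]$ yields the small-ball estimate $\Pr[\chi_m^2\le t]\le (et/m)^{m/2}$ for $0<t\le m$. Setting $t = m/(9\,k^{4/m})$ makes the per-pair failure probability at most $(e/9)^{m/2}k^{-2}\le k^{-2}$, and a union bound again controls the total contribution. Taking square roots and combining with $1/\sqrt{k}$ produces the lower inequality of \eqref{eq:lowdimBourgain} with the prescribed constant $3$. Together, the two union bounds show that the good event has positive probability, so at least one realization of $G$ yields a deterministic $\phi_m$ satisfying \eqref{eq:lowdimBourgain}.

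The main obstacle is the lower tail rather than the upper one: the $\chi_m^2$ small-ball estimate is only polynomial in $t$, and the factor $k^{2/m}$ in the lower inequality of \eqref{eq:lowdimBourgain} is essentially optimal in the regime $3\le m\le \lfloor\log(k)\rfloor$ considered here. Matching the precise constants $3$ and $5$ in the statement forces a careful accounting of the $\Gamma(m/2)$ factor and of the explicit form of the chi-squared tail estimates, but beyond this bookkeeping no further ideas beyond Gaussian projection and the probabilistic method are required.
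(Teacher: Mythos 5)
Your argument is correct and is, in substance, the very technique that underlies the result the paper leans on: the paper's proof of Lemma~\ref{lem:LD_Euclidean} is a one-line deferral to the proof of Theorem~2.2 in Matou\v{s}ek's paper, whose constants $a,b$ are exactly the two Lipschitz constants in \eqref{eq:lowdimBourgain}, and that proof is itself a random projection combined with a union bound over the $\binom{k}{2}$ pairs, an exponential upper tail, and a polynomial small-ball lower tail for the projected lengths. So you have reconstructed, self-contained, what the paper outsources. The arithmetic checks out: the reduction of \eqref{eq:lowdimBourgain} to $\tfrac{m}{9k^{4/m}}\le Z_v\le 25\log(k)$ is right; the small-ball bound $\Pr[\chi_m^2\le t]\le (et/m)^{m/2}$ (which follows from bounding $e^{-z/2}\le 1$ on $[0,t]$ and $\Gamma(m/2+1)\ge (m/2e)^{m/2}$) gives per-pair failure $(e/9)^{m/2}k^{-2}\le k^{-2}$; and both union bounds close with room to spare. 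One small imprecision worth fixing: the upper-tail inequality as you state it, $\Pr[\chi_m^2\ge 2t]\le e^{-t/2}$ for all $t\ge m$, is false near the boundary $t=m$ (the Chernoff bound gives $\Pr[\chi_m^2\ge a]\le (a/m)^{m/2}e^{(m-a)/2}$ for $a\ge m$, which exceeds $e^{-a/4}$ unless $a\gtrsim 6m$). Since your application takes $2t=25\log k\ge 25m$, this is harmless, but you should either strengthen the hypothesis on $t$ or quote the tail bound in its $(a/m)^{m/2}e^{(m-a)/2}$ form and verify the exponent directly.
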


\begin{proof}
The verification of \eqref{eq:lowdimBourgain} can be derived from the proof of \citep[Theorem~2.2]{Matouvek_OptimalEuclidean_Npoint}. Notably, the Lipschitz constants featured in \eqref{eq:lowdimBourgain} are precisely the values $a,b$ in the said proof. 
\end{proof}

Drawing from the insights of Lemma~\ref{lem:LD_Euclidean}, we deduce the following.

\begin{lemma}[Low-dimensional metric embedding] \label{lem:LD} \hfill\\
Let $(\xxx,d_{\xxx})$ be a finite metric space with $\mathrm{card}(\xxx)=k$. Then for every $3\leq m\leq\lfloor\log (k)\rfloor$, there exists a bi-Lipschitz embedding $\varphi_m: \xxx \to \mathbb{R}^m$ such that, for every $x,\tilde{x}\in\xxx$,
\begin{align} \label{lowdimembedding}
    a_{m,k} \,d_{\xxx}(x,\tilde{x}) \leq \|\varphi_m(x)-\varphi_m(\tilde{x})\|_2 \leq b_k\,d_{\xxx}(x,\tilde{x}),
\end{align}
where 
\begin{align*}
    a_{m,k} \eqdef \frac{\sqrt{1-\varepsilon_{k,k}}}{144k^{2/m}\sqrt{k/m}\lfloor\log_2 k+1\rfloor}, \quad\text{ and }\quad 
    b_k \eqdef 5\sqrt{\log (k)/k}\sqrt{1+\varepsilon_{k,k}}.
\end{align*}
\end{lemma}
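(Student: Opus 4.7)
The plan is to prove Lemma~\ref{lem:LD} by composition: first embed the abstract finite metric space $(\xxx, d_{\xxx})$ into $\mathbb{R}^k$ via a bi-Lipschitz map furnished by Lemma~\ref{lem:HD} at dimension $k$, and then further reduce the dimension from $k$ to the target $m$ via the Euclidean-to-Euclidean map supplied by Lemma~\ref{lem:LD_Euclidean}. This two-stage pipeline mirrors the structure already used in the proof of Lemma~\ref{lem:HD}, where a Bourgain-type embedding into high dimension is post-composed with a Johnson--Lindenstrauss-style projection.

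More precisely, I would first invoke Lemma~\ref{lem:HD} with the choice $m = k$ (which lies in the admissible range $\lceil 8 \log k \rceil < k \le 2^k$ for $k$ large enough that the hypothesis $3 \le m \le \lfloor \log k \rfloor$ is even non-vacuous). This yields a map $\varphi_k : \xxx \to \mathbb{R}^k$ satisfying the two-sided estimate \eqref{casem=k}. Since $\varphi_k$ is bi-Lipschitz, $\varphi_k(\xxx)$ is a $k$-point subset of $\mathbb{R}^k$, so Lemma~\ref{lem:LD_Euclidean} applies to it and delivers a map $\phi_m : \mathbb{R}^k \to \mathbb{R}^m$ obeying the Euclidean two-sided bound \eqref{eq:lowdimBourgain} on pairs from $\varphi_k(\xxx)$. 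I then define $\varphi_m \eqdef \phi_m \circ \varphi_k : \xxx \to \mathbb{R}^m$.

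To finish, I would chain the inequalities. For the upper bound, using successively \eqref{eq:lowdimBourgain} and \eqref{casem=k},
\begin{equation*}
    \|\varphi_m(x) - \varphi_m(\tilde{x})\|_2
    \le 5\sqrt{\log(k)/k}\,\|\varphi_k(x) - \varphi_k(\tilde{x})\|_2
    \le 5\sqrt{\log(k)/k}\,\sqrt{1+\varepsilon_{k,k}}\, d_{\xxx}(x,\tilde{x}) = b_k\, d_{\xxx}(x,\tilde{x}).
\end{equation*}
For the lower bound, the same chain gives
\begin{equation*}
    \|\varphi_m(x) - \varphi_m(\tilde{x})\|_2
    \ge \frac{\|\varphi_k(x) - \varphi_k(\tilde{x})\|_2}{3 k^{2/m}\sqrt{k/m}}
    \ge \frac{\sqrt{1 - \varepsilon_{k,k}}}{48\lfloor \log_2 k + 1\rfloor \cdot 3 k^{2/m}\sqrt{k/m}}\, d_{\xxx}(x,\tilde{x}) = a_{m,k}\, d_{\xxx}(x,\tilde{x}),
\end{equation*}
which yields exactly the formula for $a_{m,k}$ in the statement. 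Injectivity of $\varphi_m$ follows from positivity of the lower constant, so $\varphi_m$ is genuinely bi-Lipschitz.

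There is no real obstacle: both building blocks are already proved, and the only thing to check is that the bookkeeping of constants multiplies correctly and that $m = k$ is in the range where Lemma~\ref{lem:HD} applies. The mildly delicate point is the compatibility of ranges, namely that the assumption $3 \le m \le \lfloor \log k \rfloor$ forces $k$ to be large enough (roughly $k \ge e^3$, and in particular $k > 8\log k$) so that $m = k$ satisfies the hypothesis of Lemma~\ref{lem:HD}; for smaller $k$, the statement is vacuous and there is nothing to prove.
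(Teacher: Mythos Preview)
Your proposal is correct and follows essentially the same approach as the paper: define $\varphi_m \eqdef \phi_m \circ \varphi_k$, where $\varphi_k$ comes from Lemma~\ref{lem:HD} at dimension $m=k$ (equation~\eqref{casem=k}) and $\phi_m$ from Lemma~\ref{lem:LD_Euclidean}, then chain the two-sided bounds. Your write-up is in fact more detailed than the paper's, which simply states the composition and asserts that \eqref{lowdimembedding} follows from combining \eqref{casem=k} and \eqref{eq:lowdimBourgain}.
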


\begin{proof} For each $3\leq m\leq\lfloor \log (k)\rfloor$, let $\varphi_m \eqdef \phi_m\circ\varphi_k$, where $\varphi_k$ is as described in \eqref{casem=k} and $\phi_m$ in Lemma~\ref{lem:LD_Euclidean}. Then \eqref{lowdimembedding} readily follows from a combination of \eqref{casem=k} and \eqref{eq:lowdimBourgain}. 
\end{proof}

We are now in a position to derive a complete proof for Proposition~\ref{prop:EuclideanRep}, relying on the established lemmas. 

\begin{proof}[Proof of Proposition~\ref{prop:EuclideanRep}] 
For $m,k\in\mathbb{N}$, let
\begin{equation*} 
    \tilde{\varepsilon}_{m,k} \eqdef 
    \frac{\sqrt{1+\varepsilon_{m,k}}}{\sqrt{1-\varepsilon_{m,k}}}.
\end{equation*}
When $m=1,2$, let $\varphi_m$ be as in Lemma~\ref{lem:ULD}. Then \eqref{embedding_dim1}, \eqref{embedding_dim2} produce respectively that $\tau(\varphi_1), \tau(\varphi_2)\leq 12k$. When $3\leq m\leq\lfloor\log (k)\rfloor$, let $\varphi_m$ be as in Lemma~\ref{lem:LD}. Then it can be seen from \eqref{lowdimembedding} that
\begin{align} \label{ultralowtau}
    \tau(\varphi_m) \leq \frac{b_k}{a_{m,k}} 
    \leq 720\, k^{2/m}\lfloor 2\log (k)+1\rfloor \, \frac{\sqrt{\log (k)}}{\sqrt{m}} \, \frac{\sqrt{1+\varepsilon_{m,k}}}{\sqrt{1-\varepsilon_{m,k}}}
    = 720\, k^{2/m}\lfloor 2\log (k)+1\rfloor \, \frac{\sqrt{\log (k)}}{\sqrt{m}} \, \tilde{\varepsilon}_{m,k}.
\end{align}
When $\lfloor \log (k)\rfloor + 1\leq m\leq \lceil 8\log (k)\rceil$, we note that $(\mathbb{R}^{\lfloor\log (k)\rfloor},\|\cdot\|_2)$ can be isometrically embedded into $(\mathbb{R}^m,\|\cdot\|_2)$ via 
\begin{equation} \label{embed}
    (x_1,\dots,x_{\lfloor\log (k)\rfloor})\mapsto (x_1,\dots,x_{\lfloor\log (k)\rfloor},0,\dots,0). 
\end{equation}
Then by left composing this map \eqref{embed} with $\varphi_{\lfloor \log (k)\rfloor}$ defined in the previous case, we acquire a bi-Lipschitz embedding $\varphi_m: \xxx \to \mathbb{R}^m$, where $m$ {is in the said range}, for which \eqref{ultralowtau} continues to hold. When $\lceil 8\log (k)\rceil + 1\leq m\leq 2^k$, we let $\varphi_m$ be as in Lemma~\ref{lem:HD}, where it follows from \eqref{innercase}, \eqref{case2^k} respectively that, for $\lceil 8\log (k)\rceil + 1\leq m< 2^k$,
\begin{equation*}
    \tau(\varphi_m) \leq 48 \lfloor 2\log (k) +1\rfloor\, \frac{\sqrt{1+\varepsilon_{m,k}}}{\sqrt{1-\varepsilon_{m,k}}} = 48 \lfloor 2\log (k) +1\rfloor\, \tilde{\varepsilon}_{m,k},
\end{equation*}
and for $m=2^k$,
\begin{equation*}
    \tau(\varphi_{2^k}) 
    \leq 48\lfloor 2\log (k)+1\rfloor.
\end{equation*}
When $m>2^k$, we perform an isometric embedding of $(\mathbb{R}^{2^k},\|\cdot\|_2)$ into $(\mathbb{R}^m,\|\cdot\|_2)$. 
This addresses the remaining case of the theorem, and we conclude the proof.
\end{proof}

\subsection{Proof of Theorem~\ref{thrm:ConcentrationFiniteMetricSpaces}} \label{sec:propCOMproof}

We first present a crucial concentration result that forms the basis for Theorem~\ref{thrm:ConcentrationFiniteMetricSpaces}. This result, articulated as Lemma~\ref{lem:conc_Wassmetric} below, whose full version was given in \cite[Lemma~B.5]{hou2022instance}, studies the Wasserstein distance between a measure and its empirical version in Euclidean settings. 
Following this exposition, we transition directly to the proof of Theorem~\ref{thrm:ConcentrationFiniteMetricSpaces}. 

\begin{lemma}[Concentration of {the} Wasserstein {distance} in Euclidean setting] \label{lem:conc_Wassmetric} \hfill\\
Let $\xxx$ be a finite subset of 
$\mathbb{R}^m$, and let $(\xxx,\{0,1\}^{\xxx},\nu)$ be a Borel probability space. Let $X_1,\dots,X_N\sim\nu$ be i.i.d. random variables taking values in $\xxx$ and $\nu^N$ be the associated empirical measure.
Then for every $\epsilon>0$, the event
\begin{equation} \label{conc_Wassmetric1}
    \big|\mathcal{W}_{\xxx}(\nu,\nu^N)-\mathbb{E}\big[\mathcal{W}_{\xxx} (\nu,\nu^N)\big]\big|>\epsilon
\end{equation}
holds with probability at most $2e^{-2N\epsilon^2/\mathsf{d}(\xxx)^2}$. Moreover, 
\begin{equation} \label{conc_Wassmetric2}
    \mathbb{E}\big[\mathcal{W}_{\xxx}(\nu,\nu^N)\big]\leq \tilde{C}_m\mathsf{d}(\xxx)r_m(N),
\end{equation}
where $\tilde{C}_m,r_m(N)$ are given in Table~\ref{tab:concentration_main_RATES}.
\end{lemma}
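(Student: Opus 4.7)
The plan is to prove the two assertions separately, since they correspond to conceptually different bounds.

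First, for the exponential concentration inequality \eqref{conc_Wassmetric1}, I would apply McDiarmid's bounded-differences inequality to the function $F:\xxx^N\to[0,\infty)$ defined by $F(x_1,\ldots,x_N)=\mathcal{W}_\xxx\bigl(\nu,\tfrac1N\sum_{i=1}^N\delta_{x_i}\bigr)$. The key observation is that modifying a single coordinate $x_i\mapsto x_i'$ changes $F$ by at most $\mathsf{d}(\xxx)/N$: if $\mu_\mathbf{x}$ and $\mu_{\mathbf{x}'}$ are the two empirical measures differing only at the $i$-th sample, then transporting the offending mass $1/N$ directly from $x_i$ to $x_i'$ gives a coupling of cost at most $\mathsf{d}(\xxx)/N$, so by the triangle inequality for $\mathcal{W}_\xxx$ one obtains $|F(\mathbf{x})-F(\mathbf{x}')|\le \mathcal{W}_\xxx(\mu_\mathbf{x},\mu_{\mathbf{x}'})\le \mathsf{d}(\xxx)/N$. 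McDiarmid's inequality applied with $N$ bounded differences each of size $\mathsf{d}(\xxx)/N$ then yields exactly the rate $2\exp\bigl(-2N\epsilon^2/\mathsf{d}(\xxx)^2\bigr)$.

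Second, for the expectation bound \eqref{conc_Wassmetric2}, I would invoke the sharp Euclidean Wasserstein concentration estimates of Fournier-Guillin \cite{fournier2015rate} with the explicit-constant refinement of Kloeckner \cite{Kloeckner_2020_CounterCurse}. The standard route is dyadic decomposition: enclose $\xxx$ in a cube of side $\mathsf{d}(\xxx)$, take nested dyadic subcubes at scales $2^{-j}\mathsf{d}(\xxx)$, and bound the transport cost of matching $\nu$ to $\nu^N$ level-by-level by the $L^1$ discrepancy of empirical mass in each cube, which is controlled by binomial standard deviations. Summing the resulting series over levels produces the rate $N^{-1/2}$ when $m=1$ (obtainable directly from Kantorovich--Rubinstein and CDF integration), the rate $(\log N)N^{-1/2}$ when $m=2$ (from balancing the logarithmically divergent sum), and the rate $N^{-1/m}$ when $m\ge 3$ (from a convergent geometric series), with the constants $\tilde C_m$ of Table~\ref{tab:concentration_main_RATES} arising by optimizing the truncation level.

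The main obstacle is the explicit bookkeeping of $\tilde C_m$ in the regime $m\ge 3$, where the expression
$
2\bigl((m/2-1)/(2(1-2^{1-m/2}))\bigr)^{2/m}\bigl(1+1/(2(m/2-1))\bigr)m^{1/2}
$
emerges from combining a geometric-series sum indexed by dyadic scale with dimension-dependent binomial-variance factors; no new idea is required, only careful tracking. Since this bookkeeping has already been carried out in \cite[Lemma B.5]{hou2022instance}, of which the present statement is simply the specialization to finite $\xxx$, it suffices to verify that the hypotheses there (bounded finite support in $\mathbb{R}^m$, i.i.d.\ sampling under $\nu$) are met and transcribe the constants.
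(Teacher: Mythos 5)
Your proposal is correct and matches the paper's treatment: the paper offers no proof of this lemma at all, simply importing it as \cite[Lemma~B.5]{hou2022instance}, which is exactly the citation you fall back on. Your additional sketch of the McDiarmid bounded-differences step (with per-coordinate oscillation $\mathsf{d}(\xxx)/N$) and the dyadic-decomposition route to the expectation bound is a faithful account of how that cited lemma is itself established, so it only adds detail beyond what the paper records.
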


\begin{proof}[Proof of Theorem~\ref{thrm:ConcentrationFiniteMetricSpaces}]
Let $m\in\mathbb{N}$. By Proposition~\ref{prop:EuclideanRep}, we may assume the existence of a bi-Lipschitz embedding $\varphi: \xxx \to \mathbb{R}^m$, where we omit the dimension $m$ in the notation of $\varphi$.
For simplicity, we denote the lower Lipschitz and the upper Lipschitz constants of $\varphi$ respectively as ${\rm L}_{\ell}$ and ${\rm L}_{\mathit{u}}$. Then for $x,\tilde{x}\in\xxx$
\begin{equation} \label{biLipschitz}
   {\rm L}_{\ell} d_{\xxx}(x,\tilde{x}) \leq \|\varphi(x)-\varphi(\tilde{x})\|_2
   \leq {\rm L}_{\mathit{u}} d_{\xxx}(x,\tilde{x}).
\end{equation}
Define the following push-forward probability measures on $\varphi(\xxx)$,
\begin{equation} \label{eqdef:pushforward}
    \mathbb{Q} \eqdef \varphi_{\texttt{\#}}\mathbb{P} \quad\text{ and }\quad
    \mathbb{Q}^N \eqdef \varphi_{\texttt{\#}}\mathbb{P}^N,
\end{equation}
where we have taken note in the second definition that the push-forward of an empirical measure yields another empirical measure. Since $\varphi(\xxx)\subset\mathbb{R}^m$, $(\varphi(\xxx),\|\cdot\|_2)$ itself constitutes a finite metric space. We will demonstrate that the Wasserstein distance between $\mathbb{Q},\mathbb{Q}^N$ in $\varphi(\xxx)$, denoted as $\mathcal{W}_{\varphi(\xxx)}(\mathbb{Q},\mathbb{Q}^N)$, is comparable to that between $\mathbb{P}, \mathbb{P}^N$ in $\xxx$, denoted as $\mathcal{W}_{\xxx}(\mathbb{P},\mathbb{P}^N)$. The importance of this comparability will become clear when we progress toward concluding the theorem.
 
Due to the injectivity of $\varphi$, the map $\varphi^{-1}:\varphi(\xxx)\to\xxx$ is well-defined. 
Thus, considering a coupling $\gamma\in\Gamma(\mathbb{P}, \mathbb{P}^N)$, we obtain from \eqref{eqdef:pushforward}
\begin{equation} \label{eq:obs1}
    \mathbb{Q}^N(\varphi(\{y\})) 
    = \mathbb{P}^N(\{y\}) = \sum_{x\in\xxx} \gamma(x,y)
    = \sum_{\varphi(x)\in\varphi(\xxx)} \gamma(\varphi^{-1}\circ\varphi(x),\varphi^{-1}\circ\varphi(y)),
\end{equation}
for any $y\in\xxx$, as well as
\begin{equation} \label{eq:obs2}
    \mathbb{Q}(\varphi(\{x\})) 
    = \sum_{\varphi(y)\in\varphi(\xxx)} \gamma(\varphi^{-1}\circ\varphi(x),\varphi^{-1}\circ\varphi(y)),
\end{equation}
for any $x\in\xxx$. 
These observations \eqref{eq:obs1}, \eqref{eq:obs2} enable us to identify the coupling $\gamma\in \Gamma(\mathbb{P}, \mathbb{P}^N)$ with the coupling $\gamma\circ\varphi^{-1}\in \Gamma(\varphi_{\texttt{\#}}\mathbb{P},\varphi_{\texttt{\#}}\mathbb{P}^N)=\Gamma(\mathbb{Q},\mathbb{Q}^N)$, which we define to be, based on the provided derivations,
\begin{equation} \label{identified}
    \gamma\circ\varphi^{-1}(\varphi(x),\varphi(y)) \eqdef \gamma(x,y).
\end{equation}
Conversely, we can identify any $\tilde{\gamma}\in\Gamma(\mathbb{Q},\mathbb{Q}^N)$ with $\tilde{\gamma}\circ\varphi\in \Gamma(\mathbb{P},\mathbb{P}^N)$, where
\begin{equation} \label{invidentified}
    \tilde{\gamma}\circ\varphi(x,y) \eqdef \tilde{\gamma}(\varphi(x),\varphi(y)).
\end{equation}
Next, by recalling the bi-Lipschitz property of $\varphi$ in \eqref{biLipschitz}, we infer that
\begin{align} \label{transform1}
    \sum_{x\in\xxx}\sum_{y\in\xxx} d_{\xxx}(x,y)\,\gamma(x,y) \leq \sum_{\varphi(x)\in \varphi(\xxx)} \sum_{\varphi(y)\in \varphi(\xxx)} {\rm L}_{\ell}^{-1} \|\varphi(x)-\varphi(y)\|_2\,\gamma(x,y),
\end{align}
where, considering \eqref{identified}, the right-hand-side term can also be written as
\begin{equation*}
    \sum_{\varphi(x)\in \varphi(\xxx)} \sum_{\varphi(y)\in \varphi(\xxx)} {\rm L}_{\ell}^{-1} \|\varphi(x)-\varphi(y)\|_2\,\gamma\circ\varphi^{-1}(\varphi(x),\varphi(y)).
\end{equation*}
As $\gamma$ runs over $\Gamma(\mathbb{P},\mathbb{P}^N)$, $\gamma\circ\varphi^{-1}$ exhausts $\Gamma(\mathbb{Q},\mathbb{Q}^N)$. Therefore, we deduce from \eqref{transform1} that
\begin{equation} \label{sideone}
    \mathcal{W}_{\xxx}(\mathbb{P},\mathbb{P}^N)\leq {\rm L}_{\ell}^{-1} \mathcal{W}_{\varphi(\xxx)}(\mathbb{Q},\mathbb{Q}^N).
\end{equation}
Deploying a similar line of reasoning and leveraging \eqref{biLipschitz}, \eqref{invidentified}, we arrive at
\begin{equation} \label{sidetwo}
    \mathcal{W}_{\varphi(\xxx)}(\mathbb{Q},\mathbb{Q}^N)\leq{\rm L}_{\mathit{u}}\mathcal{W}_{\xxx}(\mathbb{P},\mathbb{P}^N).
\end{equation}
Collectively, \eqref{sideone} and \eqref{sidetwo} summarize the comparability of the Wasserstein distances we wish to establish. 

We now address part (i) of the theorem. By utilizing Lemma~\ref{lem:conc_Wassmetric}, particularly \eqref{conc_Wassmetric2} with $\mathbb{Q}$ replacing $\nu$ and $\mathbb{Q}^N$ replacing $\nu^N$, alongside with \eqref{biLipschitz}, we obtain
\begin{align} \label{conclemmacite1}
    \mathbb{E}\big[\mathcal{W}_{\varphi(\xxx)}(\mathbb{Q},\mathbb{Q}^N)\big] 
    \leq \tilde{C}_m\mathsf{d}(\varphi(\xxx))r_m(N) 
    \leq \tilde{C}_m{\rm L}_{\mathit{u}}\mathsf{d}(\xxx)r_m(N).
\end{align}
Integrating the findings in \eqref{sideone}, \eqref{conclemmacite1}, we acquire (i), as desired.

To prove (ii), we apply \eqref{conc_Wassmetric1} of Lemma~\ref{lem:conc_Wassmetric}, which implies that, for each $\varepsilon>0$, the event
\begin{equation} \label{conc_highprob}
    -\varepsilon<\mathcal{W}_{\varphi(\xxx)}(\mathbb{Q},\mathbb{Q}^N)-\mathbb{E}\big[\mathcal{W}_{\varphi(\xxx)} (\mathbb{Q},\mathbb{Q}^N)\big] < \varepsilon
\end{equation}
holds with probability at least
\begin{equation} \label{problow}
    1-2\exp\Big(-\frac{2N\epsilon^2}{\mathsf{d}(\varphi(\xxx))^2}\Big) \geq 1-2\exp\Big(-\frac{2N\epsilon^2}{{\rm L}_{\mathit{u}}^2\mathsf{d}(\xxx)^2}\Big).
\end{equation}
Combining \eqref{conc_highprob} with \eqref{sideone}, \eqref{sidetwo}, \eqref{conclemmacite1} we derive
\begin{align} \label{oneside}
    \nonumber \mathcal{W}_{\xxx} (\mathbb{P},\mathbb{P}^N) - \mathbb{E}\big[\mathcal{W}_{\xxx}(\mathbb{P},\mathbb{P}^N)\big] &\leq {\rm L}_{\ell}^{-1}\mathcal{W}_{\varphi(\xxx)}(\mathbb{Q},\mathbb{Q}^N) - {\rm L}_{\mathit{u}}^{-1}\mathbb{E} \big[\mathcal{W}_{\varphi(\xxx)}(\mathbb{Q},\mathbb{Q}^N)\big]\\
    \nonumber &\leq {\rm L}_{\ell}^{-1}\mathbb{E} \big[\mathcal{W}_{\varphi(\xxx)}(\mathbb{Q},\mathbb{Q}^N)\big] + {\rm L}_{\ell}^{-1}\varepsilon -{\rm L}_{\mathit{u}}^{-1}\mathbb{E} \big[\mathcal{W}_{\varphi(\xxx)}(\mathbb{Q},\mathbb{Q}^N)\big]\\
    &\leq \tilde{C}_m(\tau(\varphi)-1)\mathsf{d}(\xxx)r_m(N) + {\rm L}_{\ell}^{-1}\varepsilon.    
\end{align}
Likewise, a similar line of reasoning leads to
\begin{equation} \label{theotherside}
    \mathbb{E} \big[\mathcal{W}_{\xxx}(\mathbb{P},\mathbb{P}^N)\big] -\mathcal{W}_{\xxx}(\mathbb{P},\mathbb{P}^N) \\
    \leq 
    \tilde{C}_m(\tau(\varphi)-1)\mathsf{d}(\xxx)r_m(N) + {\rm L}_{\mathit{u}}^{-1}\varepsilon.
\end{equation}
Taken together, \eqref{problow}, \eqref{oneside}, \eqref{theotherside} indicate that
\begin{equation*}
    \big|\mathcal{W}_{\xxx}(\mathbb{P},\mathbb{P}^N)-\mathbb{E}\big[\mathcal{W}_{\xxx}(\mathbb{P},\mathbb{P}^N)\big]\big|\\
    \le \tilde{C}_m(\tau(\varphi)-1)\mathsf{d}(\xxx)r_m(N) + {\rm L}_{\ell}^{-1}\varepsilon,
\end{equation*}
with probability at least $1-2\exp\big(-2N\epsilon^2/({\rm L}_{\mathit{u}}^2\mathsf{d}(\xxx)^2)\big)$. Letting $\tilde{\varepsilon}={\rm L}_{\ell}^{-1}\varepsilon$, we obtain conclusion (ii) of the theorem. 
\end{proof}

\subsection{Proof of Theorem~\ref{thm:MAIN_no_partition}} \label{sec:proof_MAIN_no_partition} 
In what follows, we simplify the notation of the diameter $\mathsf{d}(\xxx\times\yyy)$ to $\mathsf{d}$.

We define for each $\hat{f}\in\mathcal{F}_{L}$, $\mathcal{L}_{\hat{f}} \eqdef \mathcal{L}\circ(\hat{f}\times\mathrm{Id}_{\yyy})$, where $(\hat{f}\times\mathrm{Id}_{\yyy})(X,Y)=(\hat{f}(X),Y)$. 
Then, from \eqref{eqdef:mu}, an equivalent expression for $\mathcal{R}^{\star}(\hat{f})$ is
\begin{equation*} 
    \mathcal{R}^{\star}(\hat{f})
    = \mathbb{E}_{(X,Y)\sim\mu} \big[\mathcal{L}(\hat{f}(X),Y)\big]
    = \mathbb{E}_{(X,Y)\sim\mu} \big[\mathcal{L}_{\hat{f}}(X,Y)\big].
\end{equation*}
Similarly, by noting \eqref{eqdef:empmeas} and \eqref{eq:datagenerating}, we can reformulate $\hat{\mathcal{R}}(\hat{f})$ and $\mathcal{R}(\hat{f})$ respectively as
\begin{alignat*}{2}
    \hat{\mathcal{R}}(\hat{f}) 
    &= \mathbb{E}_{(X,Y)\sim\mathbb{P}^N} \big[\mathcal{L}(\hat{f}(X),Y)\big]
    &&= \mathbb{E}_{(X,Y)\sim\mathbb{P}^N} \big[\mathcal{L}_{\hat{f}}(X,Y)\big],\\
    \mathcal{R}(\hat{f}) 
    &= \mathbb{E}_{(X,Y)\sim\mathbb{P}} \big[\mathcal{L}(\hat{f}(X),Y)\big]
    &&= \mathbb{E}_{(X,Y)\sim\mathbb{P}} \big[\mathcal{L}_{\hat{f}}(X,Y)\big].
\end{alignat*}
Hence, for a given $\hat{f}\in\mathcal{F}_{L}$, we can represent the induced generalization gap to be
\begin{align} \label{eq:gengaprecall}
    \big|\mathcal{R}(\hat{f}) - \hat{\mathcal{R}}(\hat{f})\big| = \big|\mathbb{E}_{(X,Y)\sim\mathbb{P}}\big[\mathcal{L}_{\hat{f}}(X,Y)\big]
    - 
    \mathbb{E}_{(X',Y')\sim\mathbb{P}^N} \big[\mathcal{L}_{\hat{f}} (X',Y')\big]\big|,
\end{align}
and the induced {reconstruction} gap to be
\begin{align} \label{eq:estgaprecall}
    \big|\mathcal{R}^{\star}(\hat{f}) - \hat{\mathcal{R}}(\hat{f})\big| = \big|\mathbb{E}_{(X,Y)\sim\mu}\big[\mathcal{L}_{\hat{f}}(X,Y)\big]
    - 
    \mathbb{E}_{(X',Y')\sim\mathbb{P}^N} \big[\mathcal{L}_{\hat{f}} (X',Y')\big]\big|.
\end{align}
Observe that $\mathcal{L}_{\hat{f}}$ is Lipschitz with the upper Lipschitz constant bounded above by
\begin{equation*}
    {\rm L}_{\mathit{u}}(\mathcal{L}_{\hat{f}})
    \leq{\rm L}_{\mathit{u}}(\mathcal{L})\max\{1,{\rm L}_{\mathit{u}}(\hat{f})\}
    \leq{\rm L}_{\mathit{u}}(\mathcal{L})\max\{1,L\}=\bar{L}.
\end{equation*}
As such, an application of \citep[Lemma 12]{hou2022instance} to the induced generalization gap in \eqref{eq:gengaprecall} allows us to acquire
\begin{equation} \label{eq:true_diff}
    \begin{split}
        \big|\mathcal{R}(\hat{f}) - \hat{\mathcal{R}}(\hat{f})\big| 
        &= \big|\mathbb{E}_{(X,Y)\sim\mathbb{P}} \big[\mathcal{L}_{\hat{f}}(X,Y)\big] - \mathbb{E}_{(X',Y')\sim\mathbb{P}^N} \big[\mathcal{L}_{\hat{f}} (X',Y')\big]\big|\\
        &\le 
        {\rm L}_{\mathit{u}}(\mathcal{L}_{\hat{f}})\,\mathcal{W}_{\xxx\times\yyy}(\mathbb{P},\mathbb{P}^N)\\
        &\le 
        \bar{L}\, \mathcal{W}_{\xxx\times\yyy}(\mathbb{P},\mathbb{P}^N).
    \end{split}
\end{equation}
Taking the supremum of \eqref{eq:true_diff} over all $\hat{f}\in\mathcal{F}_{L}$ delivers
\begin{equation} \label{eq:truerisk_bound}
    \sup_{\hat{f}\in\mathcal{F}_{L}} \big|\mathcal{R}(\hat{f}) - \hat{\mathcal{R}}(\hat{f})\big|
    \le 
    \bar{L}\,\mathcal{W}_{\xxx\times\yyy}(\mathbb{P},\mathbb{P}^N).
\end{equation}
Likewise, through a similar application of \citep[Lemma B.1]{hou2022instance} to the induced {reconstruction} gap in \eqref{eq:estgaprecall}, we get
\begin{equation*}
    \big|\mathcal{R}^{\star}(\hat{f}) - \hat{\mathcal{R}}(\hat{f})\big| 
    \leq 
    \bar{L} \big(\mathcal{W}_{\xxx\times\yyy}(\mu,\mathbb{P}) + \mathcal{W}_{\xxx\times\yyy}(\mathbb{P},\mathbb{P}^N)\big),
\end{equation*}
which subsequently suggests, from \eqref{eq:Wass_TV} and assumption \eqref{eq:SNR},
\begin{equation} \label{eq:excessrisk_bound}
    \sup_{\hat{f}\in\mathcal{F}_{L}} \big|\mathcal{R}^{\star}(\hat{f}) - \hat{\mathcal{R}}(\hat{f})\big|
    \le 
    \bar{L} \big(\mathsf{d}\,\Delta + \mathcal{W}_{\xxx\times\yyy} (\mathbb{P},\mathbb{P}^N)\big).
\end{equation}
Having obtained \eqref{eq:truerisk_bound} and \eqref{eq:excessrisk_bound}, we now turn to constrain $\mathcal{W}_{\xxx\times\yyy}(\mathbb{P},\mathbb{P}^N)$ probabilistically.

Fix $m\in\mathbb{N}$. By Proposition~\ref{prop:EuclideanRep}, we may assume the existence of a bi-Lipschitz embedding, {denoted simply as} $\varphi$, that maps $\xxx\times\yyy$ into $\mathbb{R}^m$.
Thus, conclusion (i) of Theorem~\ref{thrm:ConcentrationFiniteMetricSpaces} implies
\begin{equation}
\label{eq:Bound_I}
    \mathbb{E}\big[\mathcal{W}_{\xxx\times\yyy}(\mathbb{P},\mathbb{P}^N)\big]
    \le 
    \tilde{C}_m\tau(\varphi)\mathsf{d}\, r_m(N).
\end{equation}
Moreover, conclusion (ii) of Theorem~\ref{thrm:ConcentrationFiniteMetricSpaces} asserts that for any $\varepsilon > 0$, the event
\begin{equation} \label{eq:Bound_II}
    \big|\mathcal{W}_{\xxx\times\yyy}(\mathbb{P},\mathbb{P}^N) - \mathbb{E}\big[\mathcal{W}_{\xxx\times\yyy}(\mathbb{P},\mathbb{P}^N)\big]\big| \leq 
    \tilde{C}_m (\tau(\varphi)-1)\mathsf{d}\, r_m(N) +\varepsilon
\end{equation}
holds with probability at least $1-2\exp\big(-2N\epsilon^2/(\tau(\varphi)^2\mathsf{d}^2)\big)$. Let $\delta\in (0,1]$ and set 
\begin{equation*}
    \varepsilon = \mathsf{d}\,\tau(\varphi)\sqrt{\log(2/\delta)}/\sqrt{2N} >0.
\end{equation*} 
Then by employing \eqref{eq:Bound_I}, \eqref{eq:Bound_II}, and the triangle inequality, we deduce the following result with probability at least $1-\delta$
\begin{align}\label{concentration_bound_W}  
    \mathcal{W}_{\xxx\times\yyy}(\mathbb{P},\mathbb{P}^N) \le 
    \tilde{C}_m (2\tau(\varphi)-1)\mathsf{d}\, r_m(N)
    + \mathsf{d}\,\tau(\varphi)\sqrt{\log(2/\delta)}/\sqrt{2N}.
\end{align} 
Finally, substituting \eqref{concentration_bound_W} back into \eqref{eq:truerisk_bound} and \eqref{eq:excessrisk_bound}, we assert that with probability at least $1-\delta$, the worst-case generalization gap $\sup_{\hat{f}\in\mathcal{F}_{L}} \big|\mathcal{R}(\hat{f}) - \hat{\mathcal{R}}(\hat{f})\big|$ is dominated by
\begin{equation*}
    \bar{L}\mathsf{d} \Big(\tilde{C}_m(2\tau(\varphi)-1)r_m(N) + \tau(\varphi) \sqrt{\log(2/\delta)}/\sqrt{2N}\Big),
\end{equation*}
and the worst-case {reconstruction} gap $\sup_{\hat{f}\in\mathcal{F}_{L}} \big|\mathcal{R}^{\star}(\hat{f}) - \hat{\mathcal{R}}(\hat{f})\big|$ by
\begin{equation*}
    \bar{L}\mathsf{d}\Big(\Delta + \tilde{C}_m(2\tau(\varphi)-1)r_m(N) + \tau(\varphi) \sqrt{\log(2/\delta)}/\sqrt{2N}\Big).
\end{equation*}
The proof is now complete. \qed


\appendix

\subsection{Proof of Corollary~\ref{cor:MLPDiscretetization}}
\label{s:Lipschitz_Bounds}
We begin with a lemma that quantifies the effect of any abstract rounding/discretization rule on the Lipschitz constants of the deep-learning hypothesis functions involved.
This will serve as the basis for deriving Corollary~\ref{cor:MLPDiscretetization}. 

Consider a finite metric space $(\xxx,d_{\xxx})$. Let $A\subset\xxx$. We define the minimal distance $\mathrm{sep}(A,d_{\xxx})$ between points in $A$ as follows. If $\mathrm{card}(A)\geq 2$, we let
\begin{equation*}
  \mathrm{sep}(A,d_{\xxx})
    \eqdef 
    \underset{\underset{x,z\in A }{x\neq z}}{\min} d_{\xxx}(x,z). 
\end{equation*}
If $\mathrm{card}(A)= 1$, then $\mathrm{sep}(A,d_{\xxx})=1$, and if $A=\emptyset$, then $\mathrm{sep}(A,d_{\xxx})=\infty$.

\begin{lemma}[Discretization effect on Lipschitz constants]
\label{lem:RoundingLemma_Abstract} \hfill\\
Let ${\eta_{\xxx}},{\eta_{\yyy}}>0$ and $L\ge 0$.
Let $(\xxx,d_{\xxx})$ and $(\yyy,d_{\yyy})$ be two metric spaces. Let $\tilde{\xxx}\subset \xxx$ be such that $\mathrm{sep}(\tilde{\xxx},d_{\xxx}) \geq  {\eta_{\xxx}}$, and let $\tilde{\yyy}$ be a ${\eta_{\yyy}}$-packing  
of $\yyy$. Let $\Pi: \yyy\to \tilde{\yyy}$ be the rounding map satisfying 
\begin{equation*}
    d_{\yyy}(\Pi(y),y) = \min_{u\in\tilde{\yyy}} d_{\yyy}(u,y).
\end{equation*}
Then for any $L$-Lipschitz map $f:\xxx\to \yyy$, the corresponding discretized map $\bar{f}:\tilde{\xxx}\to \tilde{\yyy}$, defined by $\bar{f}\eqdef \Pi\circ f|_{\tilde{\xxx}}$, is $(L+\frac{{\eta_{\yyy}}}{{\eta_{\xxx}}})$-Lipschitz.

In particular, if $f:\mathbb{R}^d\rightarrow \mathbb{R}$ is an $L$-Lipschitz map and $\bar{f}: \mathbb{R}^d_{p,M} \rightarrow \mathbb{R}^1_{p,M}$ is its discretized version, then $\bar{f}$ is $L+1$-Lipschitz.
\end{lemma}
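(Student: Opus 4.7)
The plan is to fix two distinct points $x,z\in\tilde{\xxx}$ and control $d_{\yyy}(\bar{f}(x),\bar{f}(z))$ by separating out the two rounding errors from the Lipschitz behaviour of $f$. Concretely, I would apply the triangle inequality
\begin{equation*}
    d_{\yyy}\big(\bar{f}(x),\bar{f}(z)\big)
    \;\le\;
    d_{\yyy}\big(\Pi(f(x)),f(x)\big)
    + d_{\yyy}\big(f(x),f(z)\big)
    + d_{\yyy}\big(f(z),\Pi(f(z))\big),
\end{equation*}
bound the middle term by $L\,d_{\xxx}(x,z)$ using the $L$-Lipschitz hypothesis on $f$, and bound the two outer rounding terms via the covering property of the (maximal) $\delta_{\yyy}$-packing $\tilde{\yyy}$: by the definition of $\Pi$ and maximality, every point of $\yyy$ lies within distance at most $\delta_{\yyy}/2$ of its nearest-point projection in the dyadic-grid settings of interest.

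Next I would convert the additive rounding contribution into a multiplicative Lipschitz factor by exploiting the separation hypothesis $d_{\xxx}(x,z)\ge \delta_{\xxx}$. Combining the three bounds,
\begin{equation*}
    d_{\yyy}\big(\bar{f}(x),\bar{f}(z)\big)
    \;\le\;
    L\,d_{\xxx}(x,z) + \delta_{\yyy}
    \;\le\;
    \Big(L + \frac{\delta_{\yyy}}{\delta_{\xxx}}\Big)\,d_{\xxx}(x,z),
\end{equation*}
where in the last step I rewrite the constant $\delta_{\yyy}$ as $(\delta_{\yyy}/\delta_{\xxx})\cdot \delta_{\xxx} \le (\delta_{\yyy}/\delta_{\xxx})\,d_{\xxx}(x,z)$. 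Taking the maximum over $x\ne z$ in $\tilde{\xxx}$ then gives the claimed Lipschitz bound on $\bar{f}$.

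For the special case, it remains to verify the matching of constants: $\tilde{\xxx}=\mathbb{R}^d_{p,M}$ has minimum separation $\delta_{\xxx}=1/2^p$ under the Euclidean metric since the grid is dyadic with adjacent spacing $1/2^p$; likewise $\tilde{\yyy}=\mathbb{R}^1_{p,M}$ is a maximal $\delta_{\yyy}$-packing of $\mathbb{R}$ with $\delta_{\yyy}=1/2^p$, and the nearest-grid-point distance for any $y\in\mathbb{R}$ is at most $1/2^{p+1}=\delta_{\yyy}/2$. Substituting $\delta_{\yyy}/\delta_{\xxx}=1$ into the general bound yields the promised $(L+1)$-Lipschitz estimate for the discretized map $\bar{f}:\mathbb{R}^d_{p,M}\to \mathbb{R}^1_{p,M}$.

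The main obstacle is keeping an honest accounting of the covering radius of the packing: a priori, a maximal $\delta_{\yyy}$-packing in a general metric space only guarantees covering radius $\le \delta_{\yyy}$, which would cost a factor of two in the triangle inequality and produce $L+2\delta_{\yyy}/\delta_{\xxx}$ rather than $L+\delta_{\yyy}/\delta_{\xxx}$. The tighter constant stated in the lemma is genuinely used in the dyadic-grid setting, where the covering radius halves to $\delta_{\yyy}/2$; so the cleanest proof is to first establish the general triangle-inequality bound and then note that in the grid settings relevant to Corollary~\ref{cor:MLPDiscretetization} the covering-radius improvement delivers the sharp $(L+\delta_{\yyy}/\delta_{\xxx})$ factor, and in particular $L+1$ in the concrete specialization.
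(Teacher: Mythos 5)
Your proposal is correct and follows essentially the same route as the paper: the same three-term triangle inequality isolating the two rounding errors, the same conversion of the additive $\delta_{\yyy}$ term into a multiplicative factor via $d_{\xxx}(x,z)\ge\delta_{\xxx}$, and the same specialization $\delta_{\xxx}=\delta_{\yyy}=2^{-p}$ for the dyadic grids. Your explicit accounting of the packing-versus-covering radius is in fact more careful than the paper's, which asserts the $\delta_{\yyy}/2$-covering property of a $\delta_{\yyy}$-packing by citation and whose displayed chain in the corresponding step contains an inverted ratio $\delta_{\xxx}/\delta_{\yyy}$ that is evidently a typo for $\delta_{\yyy}/\delta_{\xxx}$.
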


\begin{proof} Let $x\not= z\in \tilde{\xxx}$. Then
\begin{align} \label{leg1}
    d_{\yyy}(\bar{f}(x),\bar{f}(z)) = d_{\yyy}(\Pi\circ f|_{\tilde{\xxx}}(x),\Pi\circ f|_{\tilde{\xxx}}(z)) = d_{\yyy}(\Pi\circ f(x),\Pi\circ f(z)).
\end{align}
By the triangle inequality,
\begin{equation} \label{leg21}
    \begin{split}
        d_{\yyy}(\Pi\circ f(x),\Pi\circ f(z)) &\leq d_{\yyy}(\Pi\circ f(x),f(x)) + d_{\yyy}(f(z),\Pi\circ f(z)) + d_{\yyy}(f(x),f(z))\\
        &\leq 2\max_{u\in \{f(z),f(x)\}}\,\min_{y\in\tilde{\yyy}} d_{\yyy}(y,u) + d_{\yyy}(f(x),f(z)).
    \end{split}
\end{equation}
On the one hand, drawing from \citep[page~98]{varderVaartWellner_EmpiricalProcessesBook_1996} that every ${\eta_{\yyy}}$-packing of a semi-metric space is a ${\eta_{\yyy}}/2$-covering \cite[Definition~13]{acciaio2023designing}, we obtain 
\begin{align} \label{leg22}
    2\max_{u\in \{f(z),f(x)\}}\,\min_{y\in\tilde{\yyy}} d_{\yyy}(y,u) \leq 2({\eta_{\yyy}}/2) \leq \frac{{\eta_{\xxx}}}{d_{\xxx}(x,z)} \, d_{\xxx}(x,z)\leq \frac{{\eta_{\xxx}}}{{\eta_{\yyy}}} \, d_{\xxx}(x,z).
\end{align}
On the other hand, it follows from the Lipschitz property of $f$ that
\begin{equation} \label{leg23}
    d_{\yyy}(f(x),f(z)) \leq Ld_{\xxx}(x,z).
\end{equation}
Hence, by reinserting \eqref{leg22}, \eqref{leg23} back in \eqref{leg21}, and combining the outcome with \eqref{leg1}, we arrive at
\begin{equation*}
    d_{\yyy}(\bar{f}(x),\bar{f}(z)) \leq \Big(L + \frac{{\eta_{\yyy}}}{{\eta_{\xxx}}}\Big)d_{\xxx}(x,z),
\end{equation*}
which is the first conclusion. 

The second conclusion is an application of the first, to the case where $\xxx=\mathbb{R}^d$, $\tilde{\xxx}=\mathbb{R}^d_{p,M}$, $\yyy=\mathbb{R}$, $\tilde{\yyy}=\mathbb{R}^1_{p,M}$, and $\Pi: \mathbb{R}\to\mathbb{R}^1_{p,M}$. It becomes evident upon noting that $\mathrm{sep}(\mathbb{R}^m_{p,M},\|\cdot\|_2)=2^{-p}$ for all $m\in\mathbb{N}$.
\end{proof}

\begin{proof}[Proof of Corollary~\ref{cor:MLPDiscretetization}]
Consider the map 
\begin{equation*}
    x\mapsto A^{(l)}\mathrm{ReLU}\bullet x + b^{(l)}
\end{equation*}
at the $l$th layer. Since $|A_{i,j}^{(l)}|\le \mathbf{B}$, the upper Lipschitz constant of this map is bounded above by $\mathbf{B}\mathbf{W}^2$. Consequently, for $f\in\mathfrak{L}_{\mathbf{W},\mathbf{L},\mathbf{B}}$
\begin{equation*}
   {\rm L}_{\mathit{u}}(f)\leq \mathbf{B}\mathbf{L}\mathbf{W}^2.
\end{equation*}
Upon substituting in the values of $\mathbf{W}, \mathbf{L}, \mathbf{B}$ specified in \eqref{eq:LB}, \eqref{eqdef:CFclass}, we deduce that if $f\in \mathfrak{L}_{6,\mathbf{L}+1,\mathbf{B}}$, then 
\begin{equation*}
   {\rm L}_{\mathit{u}}(f)\leq 72K(\lceil \Lambda^{\star}\rceil + 2)\mathbf{B}.
\end{equation*}
Therefore, as per definition \eqref{eqdef:CFclass} and Lemma~\ref{lem:RoundingLemma_Abstract}, it holds that
\begin{equation*}
    {\rm L}_{\mathit{u}}(\bar{f})\leq 72K(\lceil \Lambda^{\star}\rceil + 2)\mathbf{B} + 1,
\end{equation*}
whenever $\bar{f} = \Pi\circ f|_{\mathbb{R}^d_{p,M}}$ and $f\in\mathfrak{L}_{6,\mathbf{L}+1,\mathbf{B}}$.  
{By Assumption~\ref{ass:sparse_support}, we have 
\begin{equation}
\label{eq:cardinality}
        \mathrm{card}(\mathscr{X}\times \mathscr{Y}) = \mathrm{card}({\rm supp}(\mu_{\xxx})\times \mathbb{R}^1_{p,M})
        \le 
        Cd^s 2^{p+1}M.
\end{equation}
}
{Recognizing \eqref{eq:cardinality} and the fact that $\mathsf{d}(\xxx\times\yyy)\leq \mathsf{d}(\mathbb{R}^d_{p,M}\times \mathbb{R}^1_{p,M})\le 2(\sqrt{d}+1)M$, the corollary now follows from conclusion (i) of Theorem~\ref{thm:MAIN_no_partition}.}
\end{proof}

\subsection{The Noise Level $\Delta$ for an independent additive noise}
\label{a:FurtherDiscussion__ss:GeneralizedAdditiveNoise}

In classical non-parametric regression analysis, typically the image of the target function $f^{\star}$ is obscured by a centered additive noise. 
To demonstrate this, we provide a scenario. For simplicity, we consider two independent real-valued random variables {$f^{\star}(X)$, $Z$} with respective laws $\nu_{{f^{\star}(X)}},\nu_{{Z}}\in\mathcal{P}(\mathbb{R})$, both supported in $[0,1]$. Let {$f^{\star}(X)+Z$} represent a ``corruption" of {$f^{\star}(X)$} by an additive noise. It follows that {$f^{\star}(X)+Z$} has law $\nu_{{f^{\star}(X)}}\ast\nu_{{Z}}\in\mathcal{P}(\mathbb{R})$, whose support is confined to $[0,2]$. 
The $1$-Wasserstein distance between the ``true" law $\nu_{{f^{\star}(X)}}$ and its rendition $\nu_{{f^{\star}(X)}}\ast\nu_{{Z}}$, 
denoted by $\mathcal{W}_{\mathbb{R}}(\nu_{{f^{\star}(X)}}\ast\nu_{{Z}},\nu_{{f^{\star}(X)}})$, is
\begin{equation*} 
    \mathcal{W}_{\mathbb{R}}(\nu_{{f^{\star}(X)}}\ast\nu_{{Z}},\nu_{{f^{\star}(X)}}) \eqdef \inf_{\gamma\in\Gamma(\nu_{{f^{\star}(X)}}\ast\nu_{{Z}},\nu_{{f^{\star}(X)}})} \iint_{\mathbb{R}\times\mathbb{R}} |x-y|\,\mathrm{d}\gamma(x,y).
\end{equation*}
The total variation $\mathrm{TV}(\nu_{{f^{\star}(X)}}\ast\nu_{{Z}},\nu_{{f^{\star}(X)}})$ is {(note the similarity with \eqref{eq:TVfinite})}
\begin{equation*} 
   \mathrm{TV}(\nu_{{f^{\star}(X)}}\ast\nu_{{Z}},\nu_{{f^{\star}(X)}}) \eqdef \sup_{A\subset\mathbb{R}} \big|\nu_{{f^{\star}(X)}}\ast\nu_{{Z}}(A)-\nu_{{f^{\star}(X)}}(A)\big|.
\end{equation*}
Since the interval $[0,2]$ has length $2$, and since \eqref{eq:Wass_TV} still holds in this case, we obtain
\begin{equation*} 
    \mathcal{W}_{\mathbb{R}}(\nu_{{f^{\star}(X)}}\ast\nu_{{Z}},\nu_{{f^{\star}(X)}})
    \leq 
    2\mathrm{TV}(\nu_{{f^{\star}(X)}}\ast\nu_{{Z}},\nu_{{f^{\star}(X)}}) {\leq 2\Delta};
\end{equation*}
i.e., the Wasserstein distance between $\nu_{{f^{\star}(X)}}$ and its corrupted version is constrained by their total variation, scaled by the diameter of the support.

\section*{Acknowledgments}
A.K.\ is supported by NSERC Discovery grant No.\ RGPIN-2023-04482; A.M.N.\ is supported by the Austrian Science Fund (FWF): P 37010, and G.P.\ is supported by the ETH Z\"{u}rich foundation.

The authors would like to thank Ben Bolker for his helpful feedback and references on floating point arithmetic, and Hassan Ashtiani, Alireza Fathollah Pour, and Songyan Hou for their insightful discussions and helpful feedback. A.K. would also like to thank Noah Forman for the inspirational discussion on the Library of Babel, which sparked some initial ideas.

\bibliographystyle{IEEEtran}
\bibliography{Bookeeping/3_References}

\end{document}